\definecolor{mydarkblue}{rgb}{0,0.08,0.45}
\definecolor{mydarkred}{rgb}{0.6,0,0}
\definecolor{myblue}{HTML}{268BD2}
\definecolor{mygreen}{HTML}{658354}
\definecolor{orangeinplot}{HTML}{e29c7a}
\definecolor{purpleinplot}{HTML}{7676a4}
\definecolor{greeninplot}{HTML}{288308}
\newtheorem{assumption}{Assumption}
\newtheorem{definition}{Definition}
\newtheorem{definition*}{Definition}
\newtheorem{Remark}{Remark}
\newtheorem*{assumption*}{Assumption}
\newtheorem*{principle*}{Principle}
\newtheorem{theorem}{Theorem}
\newtheorem{Theorem}{Theorem}
\newtheorem*{theorem*}{Theorem}
\newtheorem*{corollary*}{Corollary}
\newtheorem{lemma}[theorem]{Lemma}
\title{Limited Preference Data? Learning Better Reward Model with Latent Space Synthesis
}
\author{%
  Leitian Tao\textsuperscript{1} \quad
  Xuefeng Du\textsuperscript{2} \quad
  Sharon Li\textsuperscript{1} \\
  \textsuperscript{1}Department of Computer Sciences, University of Wisconsin-Madison \\
  \textsuperscript{2}College of Computing and Data Science, Nanyang Technological University \\
  \texttt{leitiantao@cs.wisc.edu, xuefeng.du@ntu.edu.sg, sharonli@cs.wisc.edu}
}
\begin{document}

\maketitle

\begin{abstract}
Reward modeling, crucial for aligning large language models (LLMs) with human preferences, is often bottlenecked by the high cost of preference data. Existing textual data synthesis methods are computationally expensive. We propose a novel framework \textbf{LENS} for synthesizing preference data directly in the LLM's latent embedding space. Our method employs a Variational Autoencoder (VAE) to learn a structured latent representation of response embeddings. By performing controlled perturbations in this latent space and decoding back to the embedding space, we efficiently generate diverse, semantically consistent synthetic preference pairs, bypassing costly text generation and annotation. We provide theoretical guarantees that our synthesized pairs approximately preserve original preference ordering and improve reward model generalization. Empirically, our latent-space synthesis significantly outperforms text-based augmentation on standard benchmarks, achieving superior results while being 18× faster in generation and using a 16,000× smaller model. Our work offers a scalable and effective alternative for enhancing reward modeling through efficient data augmentation. Code is publicly available at \url{https://github.com/deeplearning-wisc/lens}.
\end{abstract}
\vspace{-0.3cm}

\section{Introduction} 
 
As large language models (LLMs) are increasingly deployed in settings that interact with or influence people~\cite{brown2020language,achiam2023gpt,team2023gemini,claude2023}, translating human feedback into a reward function has become a cornerstone of AI development~\citep{yeh2025challenges}. Reward modeling, which learns to assign higher scores to preferred responses over rejected ones, is a critical component in post-training and decision-making systems~\cite{christiano2017deep,ziegler2019fine,ouyang2022training,li2025pqm}. A strong reward model can be used not only to supervise language model behavior but also to support high-leverage tasks like rejection sampling, preference ranking, and quality estimation.

Despite its importance, reward modeling remains bottlenecked by data collection and computational cost. Human preference labels are expensive and time-consuming to collect at scale. To address the high cost of collecting human preference data, researchers have explored {textual space synthesis} approaches \cite{yang2024rlcd,qiang2024prompt,yuan2024selfrewarding,huang2024self,garg2025ipo}. As shown in Figure~\ref{fig:fig1}, these methods typically involve a two-stage process: first, using LLMs to generate multiple diverse responses to the same prompt; then, employing an auxiliary LLM to annotate these responses by creating pairwise preference data, which is subsequently used for reward model training. However, this approach faces significant computational challenges. The response generation phase requires substantial computational resources to produce diverse, high-quality candidates. The preference annotation phase is also resource-intensive, as it requires running LLMs to evaluate and rank each response pair, which scales quadratically with the number of responses per prompt. These challenges raise the question: \emph{\textbf{Given limited preference data, can we {efficiently} expand the dataset to improve reward modeling}}?

\begin{figure}[t] 
    \centering 
    \includegraphics[width=0.8\textwidth]{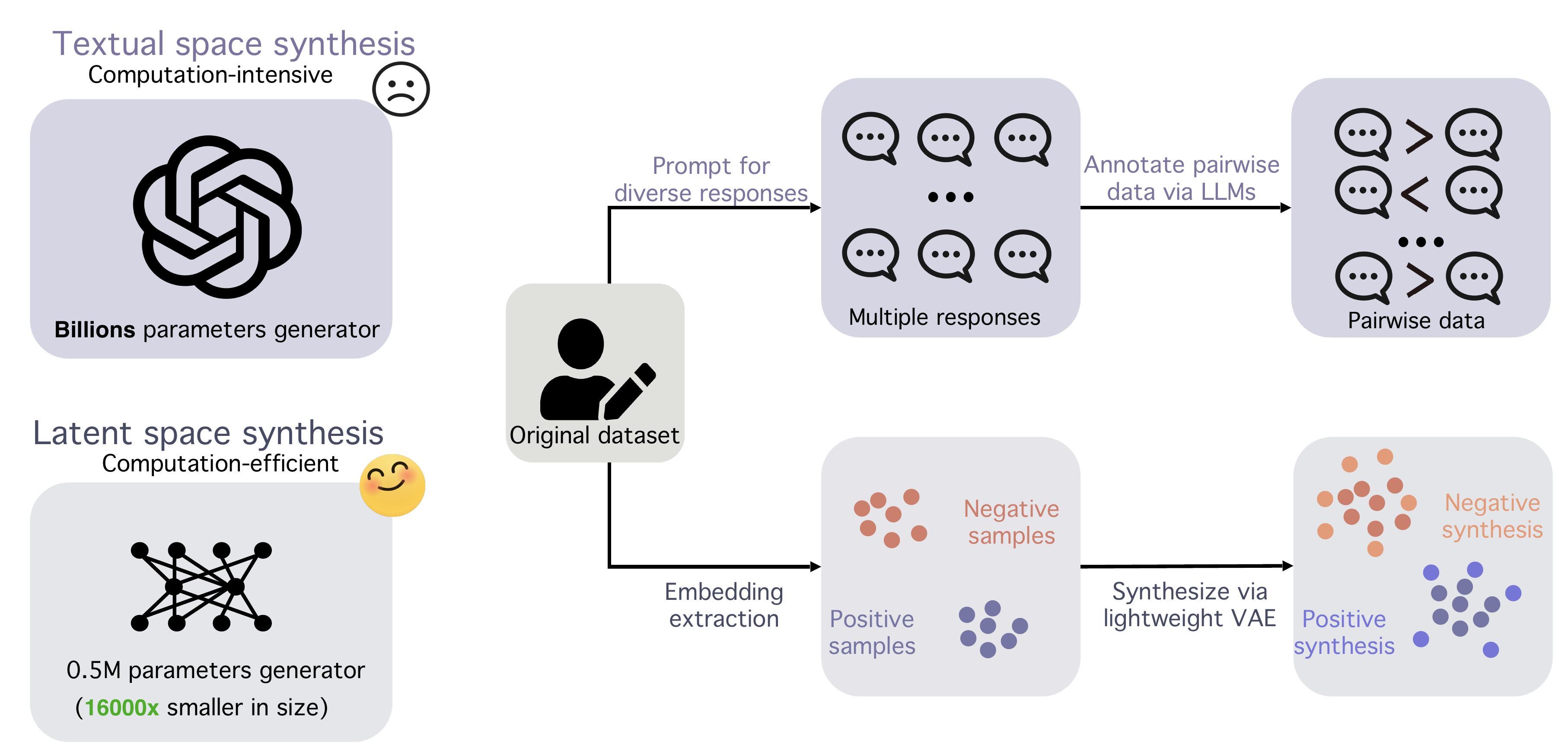} 
    \caption{\small Comparison of textual space synthesis (top) and latent space synthesis (bottom). Latent space synthesis operates on embeddings, offering significant computational advantages. Best viewed in color.} 
    \label{fig:fig1} 
    \vspace{-0.5cm} 
\end{figure}

To address these challenges, we propose a new framework--\textbf{LENS} (Latent EmbeddiNg for  Synthesis)---which synthesizes preference data \textit{{in the latent space}} rather than the textual space. LENS bypasses the computational expense of text generation, avoids complex prompt engineering, and leverages the semantic structure already captured by the language model embeddings. To model and generate plausible preference data, we leverage a Variational Autoencoder (VAE), a generative model that learns a smooth and structured distribution in its latent space. The VAE is particularly well-suited for this task: mapping each input to a distribution over latent variables rather than a fixed point enables localized sampling that preserves semantic consistency while introducing meaningful diversity---which is critical for generating synthetic preference pairs. In particular, we synthesize new preference pairs by performing controlled perturbations in the learned latent space of VAE, which can then be decoded back to the LLM embedding space. This creates an augmented dataset to enhance reward modeling without requiring additional human annotation and bypassing expensive text generation. 

Importantly, our approach comes with theoretical guarantees. We show that synthetic preference pairs generated by LENS preserve the original preference ordering, up to a bounded error that depends on the noise level and reconstruction quality of VAE (Theorem~\ref{the:main1}). This establishes that preference relationships are maintained after latent-space perturbation and decoding. Moreover, we prove that augmenting the original training set with these synthetic samples reduces the error upper bound in reward modeling by effectively increasing the sample size while maintaining preference consistency (Theorem~\ref{the:main2}). Beyond theoretical understanding, we further validate our method on widely used reward modeling benchmarks, demonstrating that latent-space synthesis consistently outperforms text-based synthesis approaches. Notably, our method requires significantly fewer computational resources to sample new preference data, requiring only 0.5M parameters compared to billions of LLM parameters for the text-based synthesis method, reducing the generator size by 16,000$\times$. These efficiency gains make our approach highly scalable and practical for real-world deployment, especially in resource-constrained settings. We summarize our contributions below:

\begin{enumerate}
\vspace{-0.2cm}
\item We propose a novel framework LENS for synthesizing preference data directly in the latent embedding space of language models, enabling efficient reward modeling without the need for text generation or heavy prompt engineering.

\item We design a contrastive VAE that learns to generate diverse yet semantically meaningful synthetic embeddings and provide a theoretical analysis showing how latent-space synthesis improves the generalization of reward modeling.

\item We demonstrate strong empirical results across reward modeling benchmarks, achieving superior performance to text-based augmentation while being 18× faster and 16,000× smaller in model size, with detailed ablations validating design choices. \end{enumerate}

\section{Preliminaries}
\label{sec:prelim_rlhf}

Large language models map a natural language prompt $x \in \mathcal{X}$ to a generated response $y \in \mathcal{Y}$, defining a probability distribution $p_\theta(y \mid x)$ over the vocabulary space $\mathcal{V}$. The reward model acts as an auxiliary function that evaluates the quality of responses, which is trained to produce a higher score for the preferred response given a query. Reward modeling relies on preference-labeled data, typically collected as pairwise comparisons. Formally, we define the preference data below.

\begin{definition}[\textbf{Preference Data}.]
    Consider two responses $y^+, y^-$ for an input prompt $x$, we denote $y^+ \succ y^-$ if $y^+$ is preferred over $y^-$. We call $y^+$ the preferred response and $y^-$ the rejected response. Each triplet $(x, y^+, y^-)$ is referred to as a preference. Furthermore, the empirical dataset $\mathcal{D}=\{(x_i, y^{+}_i, y^{-}_i)\}_{i=1}^N$ consists of $N$ such triplets sampled from a preference distribution $\mathcal{P}$.
\end{definition}

\paragraph{Reward modeling objective.} 

Reward modeling learns a function mapping, which takes in the prompt $x$ and response $y$ and outputs a scalar value $r(x,y)$ signifying the reward. A preferred response should receive a higher reward, and vice versa. Based on the Bradley–Terry model \citep{bradley1952rank}, the reward function is optimized over a dataset of human preferences, with the following objective:
\begin{equation}
\label{eq:rm}
    \mathcal{L}_{RM}^{\mathcal{D}} = -\mathbb{E}_{(x,y^+,y^-)\in \mathcal{D}}[\log \sigma(r(x,y^+) - r(x,y^-))],
\end{equation}
where $\sigma$ denotes the sigmoid function. This objective encourages the reward function to produce the observed orderings. Once trained, the reward model can be used to score future responses. \emph{Training high-quality reward models remains heavily bottlenecked by the cost of collecting large-scale human preference data, which is both time-consuming and labor-intensive}. Our work hence focuses on the practical setting where $N$ is moderately small. Recent work has explored augmenting preference datasets through textual response synthesis and LLM-based annotation~\cite{yuan2024selfrewarding,huang2024self,garg2025ipo}, but such pipelines are computationally intensive, often requiring powerful models for both generation and labeling. These challenges motivate our approach to synthesize preference data directly in the latent embedding space of the language model, bypassing the need for costly generation and annotation and thus offering stronger efficiency and scalability. 

\section{Methodology}
\label{sec:methods}
In this section, we introduce our methodology to efficiently generate preference data by synthesizing new samples directly in the latent space, rather than in the text space.  This paradigm shift avoids expensive text generation, mitigates complex prompt engineering, and leverages the semantic structure already captured by the language model. Our framework consists of three main stages: (1) training a variational autoencoder with divergence learning on response embeddings (see Section \ref{sec:vae_training}), (2) generating synthetic preference pairs in the learned latent embedding space (see Section \ref{sec:synthesis}), and (3) training a reward model on a combination of original and synthesized preference data (see Section \ref{sec:reward_modelling}). The subsections below describe each component in detail.

\subsection{Variational Autoencoder with Divergence Learning}
\label{sec:vae_training}
 We begin by extracting embeddings for the language model's responses. Given a preference dataset $\mathcal{D}=\{(x_i, y^{+}_i, y^{-}_i)\}_{i=1}^N$ as defined in Section~\ref{sec:prelim_rlhf}, where $x_i$ is the prompt, $y^{+}_i$ is the preferred response, and $y^{-}_i$ is the non-preferred response, we extract the LLM embedding vectors for each preference triplet in the dataset:
\begin{align}
\mathbf{e}_{i}^{\pm} = \text{LLM}_{\text{embed}}(x_i, y^{\pm}_i),
\end{align}
where $\text{LLM}_{\text{embed}}$ represents the embedding extraction function that processes each prompt-response pair through the language model and returns the final hidden state representation at the last layer. These embeddings capture the semantic representation of the responses and serve as the starting point for our VAE-based synthesis.

\textbf{Learning latent representations with divergence-aware VAE.} 
To enhance the reward model, we synthesize additional training pairs by leveraging a VAE \cite{kingma2013auto}, a generative model that learns a probabilistic latent representation of preference data. VAE is a natural choice due to its ability to learn a smooth and structured latent space, enabling the sampling of diverse yet semantically coherent synthetic preference data.  In particular, VAEs map each input to a distribution over latent variables rather than a single point. This is crucial in our context, as it enables diverse yet semantically consistent sampling around a given embedding, which is important for generating diverse synthetic preference data for better reward modeling.

Specifically, given a dataset of preference embeddings \( \mathcal{E} = \{\mathbf{e}_i^+, \mathbf{e}_i^-\}_{i=1}^N \), where each \( \mathbf{e}_i^+ \) and \( \mathbf{e}_i^- \) denotes the original LLM embedding of a preferred and non-preferred response, the VAE encoder $\phi: \mathbb{R}^d \rightarrow \mathbb{R}^{2d_{\mathrm{VAE}}}$ transforms each $d$-dimensional LLM embedding into the posterior Gaussian parameter vectors $\boldsymbol{\mu}_{\phi}|\mathbf{e} \in \mathbb{R}^{d_{\mathrm{VAE}}}$ and $\boldsymbol{\sigma}_\phi^2 | \mathbf{e} \in \mathbb{R}^{d_{\mathrm{VAE}}}$, and $d_{\mathrm{VAE}}$ is the dimension of the VAE latent space. The posterior distributions for the positive and negative embeddings are parameterized as:
\begin{align} 
    q_{\phi}(\mathbf{z}|\mathbf{e}^+) = \mathcal{N}(\mathbf{z}; \boldsymbol{\mu}_{\phi}(\mathbf{e}^+), \boldsymbol{\sigma}_{\phi}(\mathbf{e}^+)^2 \cdot \mathbf{I}), \quad q_{\phi}(\mathbf{z}|\mathbf{e}^-) = \mathcal{N}(\mathbf{z}; \boldsymbol{\mu}_{\phi}(\mathbf{e}^-), \boldsymbol{\sigma}_{\phi}(\mathbf{e}^-)^2 \cdot \mathbf{I}).
\end{align}
The decoder then reconstructs the embeddings from samples in the latent space. This reconstruction process, denoted as $\hat{\mathbf{e}} = g_\theta(\mathbf{z})$, transforms the latent representation back into the original embedding space. The VAE loss for each sample (either a positive or a negative embedding) is given by:
\begin{equation}
\mathcal{L}_{\text{VAE}}(\mathbf{e}) = \mathcal{L}_{\text{recon}}(\mathbf{e}, \hat{\mathbf{e}}) + \beta \cdot D_{\text{KL}}\left(q_\phi(\mathbf{z}|\mathbf{e}) \,\|\, p(\mathbf{z})\right),
\end{equation}
Where \( \mathcal{L}_{\text{recon}} \) denotes a reconstruction loss between original and reconstructed embeddings, and \( p(\mathbf{z}) = \mathcal{N}(\mathbf{0}, \mathbf{I}) \) is the standard isotropic Gaussian prior. The hyperparameter \( \beta \) balances reconstruction quality and latent regularization.

To encourage the model to learn discriminative representations, we introduce a divergence term to maximize the separation between the latent distributions of positive and negative samples:
\begin{equation}
\mathcal{L}_{\text{divergence}} = -\frac{1}{N}\sum_{i=1}^N W_2\Bigl(q_\phi(\mathbf{z}^+|\mathbf{e}_i^+),\, q_\phi(\mathbf{z}^-|\mathbf{e}_i^-)\Bigr),
\end{equation}
Where $W_2$ denotes the Wasserstein distance. The overall objective is defined as:
\begin{equation}
\mathcal{L}_{\text{total}} = \frac{1}{N} \sum_{i=1}^N \left[ \mathcal{L}_{\text{VAE}}(\mathbf{e}_i^+) + \mathcal{L}_{\text{VAE}}(\mathbf{e}_i^-) \right] + \gamma \cdot \mathcal{L}_{\text{divergence}},
\end{equation}
with hyperparameter \( \gamma \) controlling the importance of divergence term. We provide ablations on the impact of both $\beta$ and $\gamma$ in Section~\ref{sec:ablations} and Appendix~\ref{app:ablation}.

\subsection{Latent Space Synthesis}
\label{sec:synthesis}
\paragraph{Latent space sampling.} Once the VAE has been trained on preference embeddings, we synthesize new data by performing controlled perturbations on the known pairwise responses in the learned latent space. The goal is to generate new embeddings that maintain the semantic representation of the original responses while preserving preference relationships. For each embedding, multiple noisy variants are generated by adding Gaussian noise to its latent vector $\mathbf{z}_i^\pm$; these noisy latent vectors are subsequently decoded to generate new embeddings $\hat{\mathbf{e}}_{i,j}^\pm$:
\begin{equation}
    \hat{\mathbf{e}}_{i,j}^\pm = g_\theta(\mathbf{z}_i^\pm + \boldsymbol{\eta}_{i,j}^\pm), \quad \text{where } \boldsymbol{\eta}_{i,j}^\pm \sim \mathcal{N}(0, \sigma_{\text{noise}}^2 \mathbf{I}),
\label{eq:synthesis}
\end{equation}
where $j$ indexes the number of synthetic samples per original embedding, and $\sigma_{\text{noise}}$ is a hyperparameter controlling the noise level. To ensure quality and consistency, we select the top-$k$ synthetic latent codes $(\mathbf{z}_i^\pm + \boldsymbol{\eta}_{i,j}^\pm)$ based on their likelihood under the learned distributions, before decoding them.

\textbf{Forming synthetic preference pairs.} To expand the training data, we form synthetic preference pairs by pairing each synthesized or original preferred embedding with both synthesized and original non-preferred embeddings, and vice versa. This compositional pairing strategy gets an augmented dataset:
\begin{equation}
\mathcal{E}_{\text{aug}} = \left\{ (\tilde{\mathbf{e}}^{+}, \tilde{\mathbf{e}}^{-}) \;\middle|\; \tilde{\mathbf{e}}^{+} \in \mathcal{E}^{+} \cup \mathcal{E}^{+}_{\text{synth}},\; \tilde{\mathbf{e}}^{-} \in \mathcal{E}^{-} \cup \mathcal{E}^{-}_{\text{synth}} \right\},
\end{equation}
where $\mathcal{E}^{+}_{\text{synth}} = \{\hat{\mathbf{e}}_{i,j}^+\}$ and $\mathcal{E}^{-}_{\text{synth}} = \{\hat{\mathbf{e}}_{i,j}^-\}$ denote the sets of synthesized positive and negative embeddings, respectively.

\subsection{Reward Modeling with the Synthesized Preference Data}
\label{sec:reward_modelling}
The augmented dataset is used to train a reward model that can distinguish preferred data from rejected ones.  During optimization, these pairs contribute to the reward modeling objective:
\begin{equation}
\mathcal{L}_{RM}^{\mathcal{E}_{\text{aug}}} = -\mathbb{E}_{(\tilde{\mathbf{e}}^{+}, \tilde{\mathbf{e}}^{-}) \in\mathcal{E}_{\text{aug}}}\!\left[\log \sigma(r_o(\tilde{\mathbf{e}}^{+}) - r_o(\tilde{\mathbf{e}}^{-}))\right],
\label{eq:final_loss}
\end{equation}
where the function $r_o$ is a lightweight MLP mapping the embedding to the reward score. This loss encourages the model to assign higher scores to preferred embeddings and vice versa. We adopt an MLP architecture for reward modeling because recent advances in reward modeling have highlighted such training approach as a lightweight yet effective alternative to full fine-tuning~\cite{sun2023query,ahmed2024scalable,zhang2024general,li2024q,luo2025rethinking,feng2025pilaf,wen2025rethinking}. In Section~\ref{sec:experiments}, we show that this embedding-based reward modeling approach outperforms full fine-tuning methods trained on text-space augmentations, while requiring significantly less computing. By training on both real and synthesized pairs, the reward model benefits from exposure to a wider range of preference pairs, ultimately leading to better generalization. 
\section{Theoretical Analysis}
\label{sec:theoretical}

In this section, we provide a theoretical analysis to support our proposed algorithm. As an overview, Theorem~\ref{the:main1} analyzes the \textit{\textbf{quality}} of the synthetic preference pairs under the best possible reward function. We then provably investigate the \textit{\textbf{learnability}} of the reward model trained with the synthesized preference data in Theorem~\ref{the:main2}, demonstrating that it can be better than the reward model trained without synthesis under certain regulatory conditions. We specify several mild assumptions and necessary notations for our theorems in Appendix~\ref{sec:assp_app}. Due
to space limitations, we omit unimportant constants and simplify the statements of our theorems. We
defer the full formal statements in Appendix~\ref{sec:main_theorem_app}. All proofs can be found in Appendix~\ref{sec:proof_app}.

\subsection{Analysis on Synthesis Quality}
We first analyze the quality of the synthetic preference embeddings $(\hat{\mathbf{e}}^{+}, \hat{\mathbf{e}}^{-})$ by the best possible reward MLP function that serves as an ideal evaluator trained over the original preference data distribution. Specifically, let $\mathcal{R}_o$ denote the hypothesis space of the reward MLP model, and $\mathcal{P}_{e}$ as the embedding distribution of the original preference dataset, the best possible reward function is formulated as $r_o^* = \mathrm{argmin}_{r_o \sim \mathcal{R}_o} \mathbb{E}_{({\mathbf{e}}^+, {\mathbf{e}}^-) \sim \mathcal{P}_{e}} \left[-\log \sigma(r_o({\mathbf{e}}^{+}) - r_o({\mathbf{e}}^{-}))\right]$. Based on $r_o^*$, the reward difference between the synthesized positive and negative pairs $r_o^*(\hat{\mathbf{e}}^{+}) - r_o^*(\hat{\mathbf{e}}^{-})$ has the following bound:

\begin{tcolorbox}[enhanced,attach boxed title to top center={yshift=-3mm,yshifttext=-1mm},
  colback=white,colframe=gray!75!black,colbacktitle=red!80!black,
  title=,fonttitle=\bfseries,
  boxed title style={size=small,colframe=red!50!black} ]
\begin{theorem}\label{the:main1}
    (Informal). Under mild conditions, for any preference LLM embedding ${\mathbf{e}}\sim\mathcal{E}$, sample a latent vector $\mathbf{z}\sim {q}_\phi(\cdot|\mathbf{e})$, if there exists a constant $\epsilon_{\mathrm{rec}}$ that satisfies $\|g_{\theta}({q}_\phi(\mathbf{z}|\mathbf{e})) - \mathbf{e}\| \leq \epsilon_{\mathrm{rec}}$, then with a high probability, for any synthesized preference embedding pairs $(\hat{\mathbf{e}}^+, \hat{\mathbf{e}}^-)$, their reward difference when evaluated by the best possible reward MLP function $r^*_o$ is lower bounded by
    \begin{equation}
        r^*_o(\hat{\mathbf{e}}^+) - r^*_o(\hat{\mathbf{e}}^-)
\;\ge\;
r^*_o({\mathbf{e}}^+) - r^*_o({\mathbf{e}}^-)
\;-\;
\mathcal{O}\!\bigl( \sigma_{\rm noise} \sqrt{d_{\mathrm{VAE}}}\bigr)- \mathcal{O}\!\bigl(\epsilon_{\mathrm{rec}}\bigr),
\;
    \end{equation}
    where $d_{\mathrm{VAE}}$ is the dimension of the VAE latent space, and $({\mathbf{e}}^+, {\mathbf{e}}^-)$ is the corresponding  preference embedding pair from which $(\hat{\mathbf{e}}^+, \hat{\mathbf{e}}^-)$ is synthesized.
\end{theorem}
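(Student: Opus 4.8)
The plan is to convert the perturbation in latent space into a perturbation in reward value through two Lipschitz arguments (one for the decoder, one for the reward MLP) and a Gaussian concentration bound that supplies the ``with high probability'' qualifier. First I would make explicit the regularity conditions that the appendix hides behind ``mild conditions'': namely that the decoder $g_\theta$ is $L_g$-Lipschitz and the best possible reward function $r_o^*$ is $L_r$-Lipschitz on the embedding space. Both are justified because $g_\theta$ and $r_o^*$ are finite MLPs with Lipschitz activations and bounded weights, so their Lipschitz constants are finite and can be absorbed into the $\mathcal{O}(\cdot)$ notation.

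The core step is to control $\|\hat{\mathbf{e}}^\pm - \mathbf{e}^\pm\|$. Writing the synthesized embedding as $\hat{\mathbf{e}} = g_\theta(\mathbf{z} + \boldsymbol{\eta})$ and the plain reconstruction as $g_\theta(\mathbf{z})$, the triangle inequality gives
\[
\|\hat{\mathbf{e}} - \mathbf{e}\|
\;\le\;
\|g_\theta(\mathbf{z}+\boldsymbol{\eta}) - g_\theta(\mathbf{z})\|
+ \|g_\theta(\mathbf{z}) - \mathbf{e}\|
\;\le\;
L_g\,\|\boldsymbol{\eta}\| + \epsilon_{\mathrm{rec}},
\]
where the first term uses Lipschitzness of the decoder and the second uses the reconstruction assumption $\|g_\theta(\mathbf{z}) - \mathbf{e}\|\le\epsilon_{\mathrm{rec}}$. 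Next I would supply the high-probability control of $\|\boldsymbol{\eta}\|$: since $\boldsymbol{\eta}\sim\mathcal{N}(0,\sigma_{\mathrm{noise}}^2\mathbf{I}_{d_{\mathrm{VAE}}})$ and the map $\boldsymbol{\eta}\mapsto\|\boldsymbol{\eta}\|$ is $\sigma_{\mathrm{noise}}$-Lipschitz with respect to the standard Gaussian, Gaussian Lipschitz concentration yields $\|\boldsymbol{\eta}\| \le \sigma_{\mathrm{noise}}\sqrt{d_{\mathrm{VAE}}} + \sigma_{\mathrm{noise}}\sqrt{2\log(1/\delta)}$ with probability at least $1-\delta$. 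A union bound over the positive and negative samples then gives $\|\hat{\mathbf{e}}^\pm - \mathbf{e}^\pm\| \le \mathcal{O}(\sigma_{\mathrm{noise}}\sqrt{d_{\mathrm{VAE}}}) + \epsilon_{\mathrm{rec}}$ with high probability; note that the top-$k$ likelihood filtering in the synthesis step only shrinks the effective $\|\boldsymbol{\eta}\|$, so the worst-case bound is preserved.

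Finally I would decompose the reward gap as
\[
r_o^*(\hat{\mathbf{e}}^+) - r_o^*(\hat{\mathbf{e}}^-)
= \bigl[r_o^*(\mathbf{e}^+) - r_o^*(\mathbf{e}^-)\bigr]
+ \bigl(r_o^*(\hat{\mathbf{e}}^+) - r_o^*(\mathbf{e}^+)\bigr)
- \bigl(r_o^*(\hat{\mathbf{e}}^-) - r_o^*(\mathbf{e}^-)\bigr),
\]
and lower-bound each of the last two perturbation terms using $|r_o^*(\hat{\mathbf{e}}^\pm) - r_o^*(\mathbf{e}^\pm)| \le L_r\,\|\hat{\mathbf{e}}^\pm - \mathbf{e}^\pm\|$. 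Substituting the embedding bound from the previous step and folding $L_g$ and $L_r$ into the constants yields the claimed inequality with the two error terms $\mathcal{O}(\sigma_{\mathrm{noise}}\sqrt{d_{\mathrm{VAE}}})$ and $\mathcal{O}(\epsilon_{\mathrm{rec}})$.

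I expect the main obstacle to be the concentration-and-Lipschitz bookkeeping rather than any deep idea: specifically, I must ensure the $\sqrt{d_{\mathrm{VAE}}}$ scaling emerges cleanly from the Gaussian norm concentration and that the decoder and reward Lipschitz constants carry no hidden dependence on $d_{\mathrm{VAE}}$, since otherwise the advertised dimension dependence of the bound would degrade. The remaining steps (triangle inequality, telescoping decomposition, union bound) are routine once the Lipschitz assumptions are in place.
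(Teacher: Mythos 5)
Your proposal is correct and follows essentially the same route as the paper's proof: a triangle inequality splitting $\|\hat{\mathbf{e}}^\pm - \mathbf{e}^\pm\|$ into a decoder-Lipschitz term $L_g\|\boldsymbol{\eta}\|$ plus the reconstruction error, a high-probability bound on $\|\boldsymbol{\eta}\|$ (the paper uses a chi-squared/chi tail bound where you invoke Gaussian Lipschitz concentration of the norm, both giving the same $\mathcal{O}(\sigma_{\mathrm{noise}}\sqrt{d_{\mathrm{VAE}}})$ scaling), a union bound over the positive and negative samples, and regularity of $r_o^*$ applied to the telescoped reward gap. The only cosmetic difference is that the paper assumes $r_o^*$ is $\alpha$-H\"older rather than Lipschitz, yielding the slightly more general error term $2L_r(L_g t_\delta + \epsilon_{\mathrm{rec}})^\alpha$, which reduces to your bound at $\alpha=1$.
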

\end{tcolorbox}

\textbf{Implications.} The theorem states that under mild assumptions, the rewards of synthesized positive samples have a bounded margin compared to those of synthetic negative ones. If the following conditions hold: 1) the VAE is well trained so that the reconstruction error is small; 2) the injected noise magnitude $\sigma_{\rm noise}$ and the VAE latent dimension $d_{\mathrm{VAE}}$ is small (e.g., we use 16 in practice); 3) the reward margin on the original preference embedding pairs $r^*_o({\mathbf{e}}^+) - r^*_o({\mathbf{e}}^-)$ is large and bigger than 0, then the lower bound will be tight. \textbf{We verify these conditions hold empirically in Appendix~\ref{sec:verification}}.

\subsection{Analysis on Reward Model Learnability}
\label{sec:4.2}
In this section, we provide the learnability analysis for the reward MLP model that is trained with the augmented dataset $\mathcal{E}_{\rm aug}$ (Section~\ref{sec:reward_modelling}). Our results below show that the reward model trained with $\mathcal{E}_{\rm aug}$ can achieve a smaller estimation error compared to the model trained with the original preference dataset $\mathcal{E}$ under certain regulatory conditions. 

Formally, the estimation error of a reward MLP function trained on preference dataset $\mathcal{E}$ is defined as $\zeta_{\mathcal{E}} = \mathcal{L}_{RM}^{\mathcal{P}_e}(\hat{r}_{\mathcal{E}})-  \mathcal{L}_{RM}^{\mathcal{P}_e}({r}^*_{o}) $ where $\mathcal{L}_{RM}^{\mathcal{P}_e}$ is defined over distribution $\mathcal{P}_e$ and and can be calculated by Equation~\ref{eq:final_loss}. The empirical risk minimizer $\hat{r}_{\mathcal{E}}$ on the original preference dataset is formulated as $\hat{r}_{\mathcal{E}} =  \mathrm{argmin}_{r_o \in \mathcal{R}_o}\mathcal{L}_{RM}^{\mathcal{E}} (r_o)$. Then, we have

\begin{tcolorbox}[enhanced,attach boxed title to top center={yshift=-3mm,yshifttext=-1mm},
  colback=white,colframe=gray!75!black,colbacktitle=red!80!black,
  title=,fonttitle=\bfseries,
  boxed title style={size=small,colframe=red!50!black} ]
\begin{theorem}\label{the:main2}
(Informal). Let $kN$ be the size of the augmented preference dataset. If the reconstruction error in Theorem~\ref{the:main1} decays as $\epsilon_{\rm rec} = \mathcal{O}\!\bigl( N^{-p}\!\bigr) $ where $p>0$, and with probability at least $1-\delta$ and a universal constant $C>0$, if we further require $ N > \left( C  \sqrt{d + \log(1/\delta)}\right)^{\frac{1}{1/2 - p}}$, then there exists a constant $k_0>1$ such that for all $k\geq k_0$, the following estimation error condition of the reward model hold:
\begin{equation}
    \zeta_{\mathcal{E}_{\textrm{aug}}} < \zeta_{\mathcal{E}}.
\end{equation}
\end{theorem}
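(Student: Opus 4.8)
The plan is to control both $\zeta_{\mathcal{E}}$ and $\zeta_{\mathcal{E}_{\mathrm{aug}}}$ through the standard excess-risk (uniform-convergence) decomposition and then compare the two resulting bounds. Writing $\ell(u) = -\log\sigma(u)$, the per-pair loss is $\ell\bigl(r_o(\tilde{\mathbf{e}}^+) - r_o(\tilde{\mathbf{e}}^-)\bigr)$, and since $\ell(u) = \log(1+e^{-u})$ is convex and $1$-Lipschitz (with $|\ell'(u)| = \sigma(-u) < 1$), every perturbation argument downstream reduces to controlling the reward-margin perturbation. This is exactly the quantity that Theorem~\ref{the:main1} bounds, so the two results compose cleanly.

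First I would handle the generic ingredient: for a reward model trained by ERM on a sample of size $n$ drawn from a distribution $Q$, I would bound its $Q$-excess risk by symmetrization plus the Rademacher complexity of the MLP class $\mathcal{R}_o$ composed with the $1$-Lipschitz loss, yielding, with probability $\ge 1-\delta$, a generalization term of order $\sqrt{(d+\log(1/\delta))/n}$. Applied to the original data ($Q=\mathcal{P}_e$, $n=N$) this gives $\zeta_{\mathcal{E}} = \mathcal{O}\!\left(\sqrt{(d+\log(1/\delta))/N}\right)$ with no bias contribution, because $r_o^*$ is by definition the $\mathcal{P}_e$-optimal predictor in the class.

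Next, for the augmented estimator I would insert the augmented-pair distribution $\mathcal{P}_{\mathrm{aug}}$ and split
\begin{align*}
\zeta_{\mathcal{E}_{\mathrm{aug}}}
&= \bigl[\mathcal{L}_{RM}^{\mathcal{P}_e}(\hat r_{\mathcal{E}_{\mathrm{aug}}}) - \mathcal{L}_{RM}^{\mathcal{P}_{\mathrm{aug}}}(\hat r_{\mathcal{E}_{\mathrm{aug}}})\bigr]
+ \bigl[\mathcal{L}_{RM}^{\mathcal{P}_{\mathrm{aug}}}(\hat r_{\mathcal{E}_{\mathrm{aug}}}) - \mathcal{L}_{RM}^{\mathcal{P}_{\mathrm{aug}}}(r_o^*)\bigr] \\
&\quad + \bigl[\mathcal{L}_{RM}^{\mathcal{P}_{\mathrm{aug}}}(r_o^*) - \mathcal{L}_{RM}^{\mathcal{P}_e}(r_o^*)\bigr].
\end{align*}
The two bracketed shift terms measure the discrepancy between $\mathcal{P}_e$ and $\mathcal{P}_{\mathrm{aug}}$; by Theorem~\ref{the:main1} each synthetic pair preserves its parent margin up to $\mathcal{O}(\sigma_{\mathrm{noise}}\sqrt{d_{\mathrm{VAE}}} + \epsilon_{\mathrm{rec}})$, so $1$-Lipschitzness of $\ell$ bounds both shift terms by $\mathcal{O}(\sigma_{\mathrm{noise}}\sqrt{d_{\mathrm{VAE}}} + \epsilon_{\mathrm{rec}}) = \mathcal{O}(N^{-p})$ after absorbing the small constant noise contribution. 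The middle (estimation) term is handled by the generic bound with the enlarged sample size $kN$, giving $\mathcal{O}\!\left(\sqrt{(d+\log(1/\delta))/(kN)}\right)$. Combining, $\zeta_{\mathcal{E}_{\mathrm{aug}}} \lesssim \sqrt{(d+\log(1/\delta))/(kN)} + N^{-p}$. I would then choose $k_0$ large enough that the $1/\sqrt{k}$ factor shrinks the augmented variance strictly below the original $\sqrt{(d+\log(1/\delta))/N}$, and invoke the stated lower bound on $N$ to guarantee the synthesis bias $N^{-p}$ is dominated by the recovered variance gap, so that the sum is strictly smaller and $\zeta_{\mathcal{E}_{\mathrm{aug}}} < \zeta_{\mathcal{E}}$.

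The main obstacle is twofold. The synthetic pairs are \emph{not} i.i.d.\ draws from $\mathcal{P}_{\mathrm{aug}}$: each is a deterministic decode $g_\theta(\mathbf{z}_i^\pm + \boldsymbol{\eta})$ of a noised latent seeded by one of only $N$ originals, so the concentration step cannot treat the $kN$ samples as independent. I expect to need a conditioning argument—condition on the $N$ seeds, then control the noise-induced fluctuation separately—or a coupling to an i.i.d.\ surrogate, paying a correction that must still be $o\!\left(\sqrt{(d+\log(1/\delta))/N}\right)$; getting an honest variance \emph{reduction} (rather than mere non-increase) from correlated augmentations is the delicate part. The second difficulty is the bias--variance balance itself: the variance gain from $k$-fold augmentation saturates while the bias floor is fixed by $\epsilon_{\mathrm{rec}} = \mathcal{O}(N^{-p})$, so extracting the precise threshold $N > (C\sqrt{d+\log(1/\delta)})^{1/(1/2-p)}$ requires tracking constants carefully enough that the $N^{-p}$ term provably sits below the variance improvement in exactly the stated regime.
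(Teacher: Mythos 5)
Your proposal is correct and follows essentially the same skeleton as the paper's proof: bound $\zeta_{\mathcal{E}}$ by the standard uniform-convergence rate $C_1\sqrt{(d+\log(1/\delta))/N}$, decompose $\zeta_{\mathcal{E}_{\mathrm{aug}}}$ into a statistical term at sample size $kN$ plus a synthesis-bias term of order $N^{-p}$, and then let $k$ grow while invoking the threshold on $N$ so that the bias floor $\rho N^{1/2-p}$ sits strictly below $1$ in the bound ratio. The one genuine mechanical difference is how the bias term is certified: you route it through the $1$-Lipschitzness of the logistic loss applied to the margin perturbation from Theorem~\ref{the:main1}, whereas the paper couples the empirical measures $\mu=\frac1N\sum_i\delta_{\mathbf{e}_i}$ and $\hat\mu=\frac1N\sum_i\delta_{\hat{\mathbf{e}}_i}$ by the identity map to get $W_1(\hat\mu,\mu)\le\varepsilon_N$ (its Lemma~\ref{lem:w1_bound}) and then uses $\alpha$-H\"older continuity of $r_o^*$ to bound the expectation gap by $L_r\varepsilon_N^{\alpha}$, weighting by the synthetic fraction $\frac{k-1}{k}$; the two mechanisms are interchangeable and yield the same $\mathcal{O}(N^{-p})$ rate. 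Your three-term distribution-shift split is just finer bookkeeping of the paper's two-term ``statistical error plus synthesis bias'' decomposition. Notably, both obstacles you flag are real but are \emph{not} resolved in the paper either: the paper applies the $\sqrt{1/(kN)}$ concentration bound to the augmented set without any conditioning or coupling argument for the correlated synthetic samples (it simply asserts the bound with sample size $kN$), and it likewise absorbs the $N$-independent noise contribution $L_g t_\delta \propto \sigma_{\mathrm{noise}}\sqrt{d_{\mathrm{VAE}}}$ into the $\varepsilon_N=\mathcal{O}(N^{-p})$ decay exactly as you do (``absorbing the small constant noise contribution''). A further shared caveat: both you and the paper conclude $\zeta_{\mathcal{E}_{\mathrm{aug}}}<\zeta_{\mathcal{E}}$ by comparing \emph{upper bounds} (the paper writes the error ratio with an ``$\approx$''), which strictly requires treating the $\sqrt{(d+\log(1/\delta))/N}$ bound on $\zeta_{\mathcal{E}}$ as tight; so on the delicate points your attempt is, if anything, more candid than the published argument.
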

\end{tcolorbox}

 In Theorem~\ref{the:main2}, we reflect the necessary condition for the latent space synthesis to help reward modeling, and it requires the sample size of the original preference dataset to be larger than a constant. Appendix~\ref{sec:verification} empirically demonstrates that we only need a small number of original preference pairs in order to guarantee a better performance, which is easy to satisfy in practice.

\vspace{-0.2cm}
\section{Experiments}
\label{sec:experiments}
\vspace{-0.2cm}
In this section, we evaluate our latent-based synthesis approach for reward modeling, comparing LENS against baseline approaches across different scales and sample sizes, followed by ablation studies to analyze the impact of various components of our methodology. 
\subsection{Experimental Settings}
\paragraph{Model and datasets.}
We use the Llama-3.1-8B-Instruct \cite{grattafiori2024llama} as the base model. For experiments, we use two preference datasets: (1) HH-RLHF \cite{bai2022training}, which contains human preference pairs focused on helpfulness and harmlessness; and (2) the TL;DR summarization \cite{stiennon2020learning}, consisting of preference pairs for Reddit post summarization. To explore the effectiveness of using synthesis to extend the training dataset in a sample-limited scenario, we subsample 1,000 samples as seed samples. In our ablations, we extensively verify different LLM backbones and different numbers of seed samples. Full experimental configurations are included in Appendix~\ref{app:details}.

\textbf{Evaluation.} We evaluated the effectiveness of different reward models using Best-of-N (BoN) sampling following previous work \cite{wen2025rethinking,gao2023scaling,sun2025rethinking}. For each prompt in the test set, we generate $n$=16 candidate responses from the base model. The trained reward model then ranks these candidates, and we select the highest-scoring response. We report the average reward score of these selected responses as evaluated by a held-out gold reward model trained on diverse and high-quality datasets. In our experiments, the Skywork \cite{liu2024skywork} model serves as the gold reward as the ground truth quality.

\textbf{Baselines.} The traditional method of collecting pairwise preference data for reward modeling focuses on \textbf{text space synthesis}, where different responses to the same prompt are gathered and then labeled as preferred or non-preferred by human annotators or large language models. To establish baselines, we first use the base model to generate multiple responses for each prompt in the training set. For reproducibility, we employ a well-trained reward model to score these responses and determine preference rankings. We compare our latent space synthesis approach against several {text-based methods}, including (1) fully fine-tuned models that update all parameters with text-space preference data; (2) Low-rank adaptation techniques that modify only a subset of parameters; and (3) Embedding-based approaches that keep the backbone fixed while training an MLP reward head. Some works propose using the model itself to label diverse samples (which avoids introducing an additional reward model). The corresponding baselines include: (4) Self-rewarding \cite{yuan2024selfrewarding}: the model as a judge to rank pairwise responses. (5) Self-evolved \cite{huang2024self}: using the reward to ranking the pairwise responses. (6) IPO \cite{huang2024self}:  take the likelihood of the different responses to decide the preference. Finally, for \textbf{latent-space synthesis baselines}, we consider (7) direct noise addition to LLM embeddings and (8) Gaussian sampling that assumes a normal distribution of LLM embeddings without the structured learning of our VAE approach.  These baselines provide comprehensive comparison points across both textual and latent synthesis paradigms, allowing us to evaluate the proposed method.

\begin{table}[t]
    \centering
    \footnotesize
    \caption{\textbf{Main results}. Our latent-space synthesis approach outperforms text-based synthesis on both HH-RLHF and TL;DR benchmarks. We report the mean and variance of our results with \emph{three different runs}. 2$\times$, 4$\times$, and 8$\times$  denote the augmentation scale.}
    \label{tab:reward_modeling_performance}
    \scalebox{0.87}{
    \begin{tabular}{l*{4}{c}*{4}{c}}
    \toprule
    \multirow{2}{*}{Method} &  \multicolumn{4}{c}{HH-RLHF} & \multicolumn{4}{c}{TL;DR} \\
    \cmidrule(lr){2-5} \cmidrule(lr){6-9}
    &  Original & 2$\times$ & 4$\times$ & 8$\times$ & Original & 2$\times$ & 4$\times$ & 8$\times$ \\
    \midrule
    \textbf{Textual Synthesis}  & & & & & & & & \\
    
    \quad Fully fine-tune & 1.49 & 1.57 & 1.78 & 1.93 & 0.69 & 0.84 & 0.97 & 1.23 \\
   
    \quad Low rank adaptation & 1.28 & 1.48 & 1.52 & 1.61 & 0.57 & 0.87 & 0.92 & 1.15 \\
   
    \quad Embedding MLP & 1.43 & 1.51 & 1.62 & 1.73 & 0.78 & 0.94 & 1.02 & 1.11 \\
    \quad Self-rewarding \cite{yuan2024selfrewarding} & 1.49 & 1.48 & 1.59 & 1.77 & 0.69 & 0.78 & 0.92 & 0.95 \\
    \quad Self-evolved \cite{huang2024self} & 1.49 & 1.42 & 1.54 & 1.63 & 0.69 & 0.72  &  0.79 & 0.75 \\
    \quad IPO \cite{garg2025ipo} & 1.49 &  1.37 & 1.25 & 1.32 & 0.69 & 0.63 & 0.67 & 0.62 \\
    \midrule
    \textbf{Latent Space Synthesis} & & & & & & & & \\
    \quad Direct perturbation & 1.43 & 1.46 & 1.32 & 1.46 & 0.78 & 0.81 & 0.84 & 0.79 \\
    \quad Gaussian sampling & 1.43 & 1.23 & 1.12 & 0.94 & 0.78 & 0.64 & 0.53 & 0.43 \\
   \rowcolor{gray!10}  \quad LENS (Ours) & 1.43& {${\textbf{1.86}}^{\pm 0.04}$} & {${\textbf{1.94}}^{\pm 0.06}$} & {${\textbf{2.20}}^{\pm 0.12}$} & 0.78 & {${\textbf{1.25}}^{\pm 0.03}$} & {${\textbf{1.44}}^{\pm 0.05}$} & {${\textbf{1.48}}^{\pm 0.07}$} \\
    \bottomrule
    \end{tabular}
    }
    \label{tab:main}
    \vspace{-0.5cm}
\end{table}

\subsection{Main Results}

\textbf{Synthesis in latent space significantly boosts reward model performance.}
Table~\ref{tab:main} presents the main results, demonstrating that our VAE-based latent space synthesis approach consistently outperforms textual space synthesis methods across both the HH-RLHF and TL;DR datasets at various data augmentation scales. We highlight several key observations.
 \emph{First}, even without augmentation, the embedding-based MLP reward model performs competitively, achieving scores close to the fully fine-tuned model on two datasets. This indicates that LLM embeddings inherently capture significant preference information, validating the foundation of embedding-based reward modeling.
\emph{Second}, when applying augmentation, our VAE-based method shows substantial gains. For instance, at 4$\times$ augmentation on HH-RLHF, our method achieves a reward score of 1.96, significantly outperforming the strongest textual synthesis baseline (Fully fine-tuned at 1.78). Similarly, on TL;DR at 4$\times$ augmentation, our method reaches 1.42, compared to 0.97 for the fully fine-tuned textual approach. The advantage becomes even more pronounced at 8$\times$ augmentation (2.17 \emph{vs.} 1.93 on HH-RLHF; 1.46 \emph{vs.} 1.23 on TL;DR). These results underscore the effectiveness of generating synthetic data within a learned latent space. \emph{Lastly}, we compare our approach to simpler latent space baselines. Direct perturbation yields inconsistent results, and Gaussian-based sampling leads to significant performance degradation. This comparison highlights the critical role of the VAE in learning a structured latent representation that allows for the generation of diverse yet semantically meaningful and preference-preserving synthetic data, which naive LLM latent space manipulations fail.

\vspace{-0.3cm}
\paragraph{Latent space synthesis significantly reduces computational costs.} \begin{wraptable}{r}{0.5\textwidth}
\vspace{-0.3cm}
    \centering
    \scalebox{0.9}{\begin{tabular}{cccc}
        \toprule
        Metric & Textual & Latent & Reduction \\
        \midrule
         Generation time $\downarrow$ & 3.6h & \textbf{0.2h} & 18× \\
        Model size $\downarrow$ & 8B & \textbf{0.5M} & 16,000× \\
        Total runtime $\downarrow$ & 5.2h & \textbf{0.4h} & 13× \\
        \bottomrule
    \end{tabular}}
    \caption{ Computational efficiency comparison}
    \vspace{-0.2cm}
    \label{tab:efficiency}
\end{wraptable}We compare the computational cost between textual and latent space synthesis approaches for the 8$\times$ augmentation on HH-RLHF. As shown in Table~\ref{tab:efficiency}, our latent space approach yields substantial savings. It requires only 0.5M parameters compared to 8B for the text-based method—a 16,000× reduction. This translates to a 13× speedup in total processing time (from 5.2 hours to 0.4 hours on a single A100 GPU). Sample synthesis time specifically decreases from 3.6 hours to 0.2 hours ($\downarrow$18× reduction). These efficiency gains make our approach highly scalable and practical for real-world deployment, especially in resource-constrained environments. Moreover, the lightweight nature of our method allows preference synthesis to be performed rapidly, especially in scenarios where collecting human preference data can be slow and expensive. 

\textbf{LENS generalizes well across different model families and tasks.} Our latent space synthesis method demonstrates strong generalization capabilities across diverse model architectures and datasets. As shown in Table~\ref{tab:model_results} in the Appendix, when applied to various base models from different families and scales including Gemma-2B \cite{team2024gemma}, Llama-3.2-3B \cite{grattafiori2024llama}, Mistral-7B \cite{mistral}, Qwen-2.5-7B \cite{yang2024qwen2}, and Llama-3.1-8B \cite{grattafiori2024llama}, our approach consistently yields higher reward scores compared to both the original baseline and textual space synthesis at a 4$\times$ augmentation rate. For instance, with Llama-3-3B, latent synthesis achieves a reward of 0.73, significantly higher than textual synthesis (0.58) and the original baseline (0.44). Furthermore, the main results in Table~\ref{tab:main} confirm this generalizability across different tasks. Our VAE-based method significantly outperforms textual synthesis methods on both the HH-RLHF and TL;DR datasets, particularly at higher augmentation factors. This consistent performance across different models and datasets underscores the broad applicability and effectiveness of synthesizing preference data in the learned latent space.

\vspace{-0.3cm}
\paragraph{Our reward model improves SFT via rejection sampling.}
\begin{wraptable}{r}{0.4\textwidth}
    \centering
    \begin{tabular}{lcc}
        \toprule
        Synthesis & Reward & GPT-4 eval \\
        \midrule 
         Textual & 1.62 & 39\% \\
        Latent & \textbf{1.96} & \textbf{61\%} \\
        \bottomrule
    \end{tabular}
    \caption{ Comparison of SFT model performance using rejection sampling under two different reward models.}
    \label{tab:reject_sampling}
\end{wraptable}
We further evaluate how reward models trained with different synthesis methods impact downstream supervised fine-tuning (SFT). We use a held-out set of 1,000 HH-RLHF prompts (different samples that are not used in reward model training). For each prompt, we generate 16 candidate completions using Llama-3.1-8B-Instruct. Following \cite{gao2023scaling}, the trained reward model performs rejection sampling, selecting the highest-scoring response as the target for SFT. We compare SFT models trained using targets selected by two reward models: one trained with text-based synthetic data, and the other with our embedding-based latent synthesis. Both reward models use 4$\times$ augmentation, and the SFT models share identical training configurations (Appendix \ref{app:exp_setting}) apart from the selected targets. As shown in Table~\ref{tab:reject_sampling}, using the reward model trained with latent synthesis leads to higher SFT performance and a better win rate in GPT-4 as a judge \cite{li2023generative} pairwise evaluation (61\% vs. 39\%). These results demonstrate that our latent synthesis not only accelerates reward modeling but also enables better SFT outcomes by guiding sample selection more effectively.

\subsection{Ablation Studies}
\label{sec:ablations}
\paragraph{Effect of divergence loss weight $\gamma$.} 
\begin{figure}[t] \centering \includegraphics[width=0.95\textwidth]{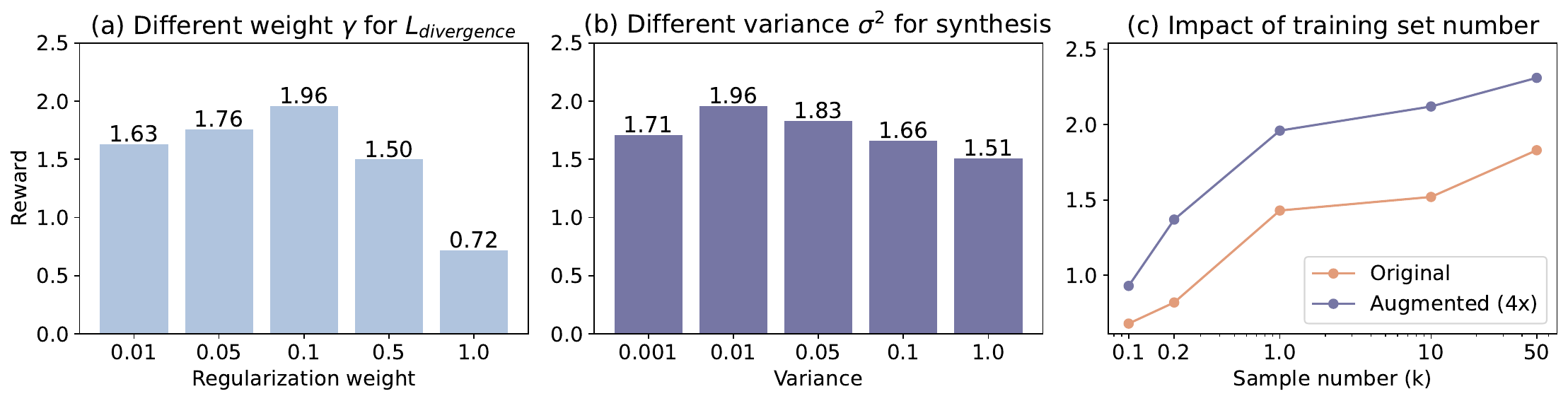} \caption{(a) Effect of weight of $\mathcal{L}_\text{divergence}$. (b) Effect of noise variance $\sigma^2$ during synthesis. (c) Ablation on the number of initial training samples (in thousands).
}
\vspace{-0.3cm}
\label{fig:ablations} \end{figure}

We ablate the impact of the contrastive loss weight $\gamma$, which controls the degree of separation between the VAE latent distributions of preferred and non-preferred data. As shown in Figure~\ref{fig:ablations}a, increasing $\gamma$ from 0 to 0.1 improves the quality of the learned latent space, leading to stronger separation and higher downstream reward performance. However, excessively large $\gamma$ values (e.g., 0.5 or 1.0) result in over-separation, ultimately degrading performance. We hypothesize that when the divergence loss is too strong, the latent embeddings of positive and negative responses become trivially distinguishable. As a result, the synthetic preference pairs constructed from such over-separated distributions are too easy for the reward model, diminishing the benefit. 

The visualization in Figure~\ref{fig:latent_synthesis} supports this, showing that moderate regularization (e.g., $\gamma=0.1$) achieves a balanced latent geometry, where preferred (orange) and non-preferred (purple) responses are well-separated yet still diverse. This underscores the importance of setting $\gamma$ mildly to avoid degenerate solutions and preserve the richness of preference training data.

\textbf{Effect of synthesis noise $\sigma^2$.} Figure~\ref{fig:ablations}b demonstrates how the variance of noise added during latent space synthesis affects model performance. Moderate noise levels ($\sigma^2 = 0.01$) yield optimal results with a reward score of 1.96. This aligns with our theoretical insight that the quality of synthetic preferences depends on the noise term. When the noise is too small ($\sigma^2 = 0.001$),  the model fails to adequately explore the latent space, limiting the diversity of synthetic samples and reducing the reward score to 1.63. Excessive noise ($\sigma^2 = 1.0$) violates the preference preservation condition by making noise too large, leading to unrealistic embeddings and reducing performance to 1.51. 
\paragraph{Impact of original training set size.} In Figure~\ref{fig:ablations}(c), we compare reward model performance using the original training data (\textit{Original}) versus our 4$\times$ latent augmentation (\textit{Augmented (4$\times$)}) across initial dataset sizes from 0.1k to 50k samples. While both methods improve with more data, our latent space synthesis consistently outperforms the baseline across all scales. For instance, with even 0.1k original samples, augmentation boosts the reward score to 0.93 compared to the baseline's 0.68. This demonstrates that latent augmentation is effective across data scales, particularly in low-data regimes, while remaining beneficial even with larger original datasets.
\begin{figure}[t]
    \centering
    \includegraphics[width=0.85\textwidth]{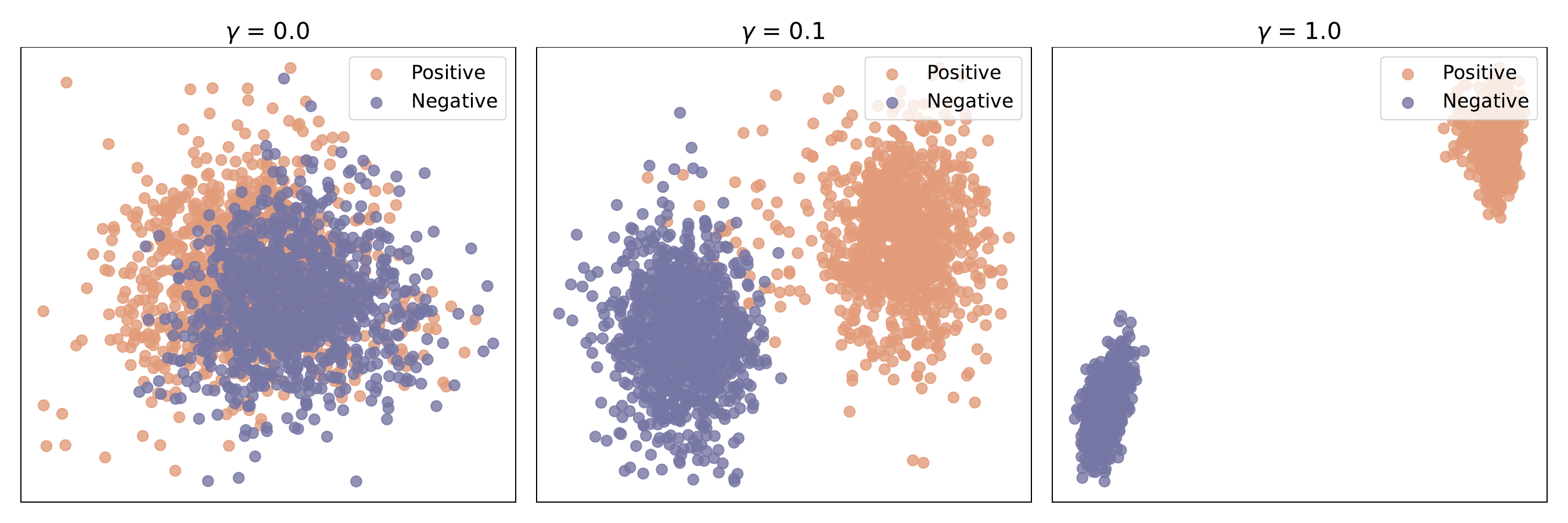}
    \vspace{-0.3cm}
    \caption{ {The t-SNE visualization of VAE latent space with different levels of divergence regularization}, controlled through the loss weight $\gamma$.  }
    \label{fig:latent_synthesis}
    \vspace{-0.2cm}
    
\end{figure}
\section{Related work}
\textbf{Reward modeling.} Many approaches use human feedback to refine models based on preferences, typically through pairwise comparisons or ranking annotations \cite{christiano2017deep, ziegler2019fine, stiennon2020learning, ouyang2022training, nakano2022webgpt, glaese2022improving, snell2023offline, yuan2023rrhf}. While effective, collecting such data is costly, motivating the use of AI-generated feedback as a cheaper alternative \cite{bai2022training, lee2023rlaif, ding2022gpt, gilardi2023chatgpt, tao2025your}. However, this often requires sampling and evaluating responses with large language models, incurring high computational costs. To mitigate this, some works propose using the model itself to label diverse samples, avoiding the need for an external judge or reward model. These include self-rewarding methods where the model acts as a judge to rank pairwise responses \cite{yuan2024selfrewarding}, using the model's own reward signals for ranking \cite{huang2024self}, or leveraging response likelihoods to infer preferences \cite{garg2025ipo}. While reducing reliance on external models, these approaches still involve text generation and evaluation via LLMs.  To reduce overheads for reward modeling, recent work has explored embedding-based methods, which train lightweight models based on frozen LLM representations \cite{sun2023query, ahmed2024scalable, zhang2024general, li2024q, luo2025rethinking, sun2025rethinking}. In parallel, several active learning approaches have been proposed to improve data efficiency by adaptively selecting preference queries \cite{muldrew2024active, shen2025active, feng2025pilaf}. While these methods are competitive, they rely on collected datasets and do not address how to expand preference data efficiently without additional human labeling. In contrast, our work goes further by synthesizing new preference pairs directly in the latent space via generative modeling. This approach retains the efficiency of embedding-based models while expanding the training signal, enabling scalable reward modeling with minimal reliance on human or LLM-generated feedback.

\vspace{-0.2cm}
\paragraph{Latent space synthesis.} 
Latent space synthesis has been explored through a range of generative frameworks, including VAEs~\cite{kingma2013auto,higgins2017beta}, which learn compressed probabilistic representations; Generative Adversarial Networks (GANs)~\cite{goodfellow2020generative,karras2019style}, which employ adversarial training; and diffusion models~\cite{ho2020denoising}, which gradually denoise random signals. Synthesis in latent space also shows great performance on language models in text generation \cite{bowman2015generating,yu2017seqgan,zhang2019pretraining,fogel2020scrabblegan,agrawal2019controlling}, and enhancing the out-of-distribution detection for image classification~\citep{du2022vos, tao2023nonparametric}. To our knowledge, our work is the first to explore latent-space synthesis for reward modeling of LLMs. Compared to popular text-space synthesis, our framework bypasses the computational expense of text generation while leveraging the semantic structure already captured by the language model.

\section{Conclusion}
In this paper, we introduced a novel approach LENS for synthesizing training data in the embedding space for reward modeling from human feedback. Our VAE-based method with contrastive learning efficiently generates high-quality preference data while preserving semantic representations, as supported by our theoretical guarantees. Experimental results demonstrate that our approach consistently outperforms textual space synthesis baselines while significantly reducing computational costs—requiring only 0.5M parameters versus 8B for text-based methods and cutting processing time by at least a magnitude. These efficiency gains make our method particularly valuable for scaling preference data in reward modeling pipelines, representing an important step toward making reward modeling more accessible for aligning large language models with human values.

\section*{Broader Impact}
\label{app:boarder}
The development of efficient methods for reward modeling, such as the latent space synthesis approach proposed in this paper, carries significant broader impacts. Reward modeling is fundamental to aligning large language models with human preferences, contributing to the safety and utility of AI systems deployed across various societal domains~\cite{christiano2017deep, ouyang2022training}. Current methods, particularly those relying on human annotation or using LLMs for the textual synthesis, face substantial bottlenecks due to data collection costs and computational demands~\cite{bai2022training,dubois2023alpacafarm, yuan2024selfrewarding}. Our work directly addresses these challenges by offering a technique that is significantly more computationally efficient (e.g., 18× faster generation time, 16,000× smaller model size) compared to text-based synthesis, as highlighted in the introduction and experiments. This increased efficiency has the potential to democratize AI alignment research and development. By lowering the barrier to entry related to computational resources and data acquisition, smaller research labs, startups, or organizations in resource-constrained environments can more readily develop and deploy reward models. This could accelerate the creation of beneficial applications, from improved virtual assistants to more effective educational tools and content moderation systems, fostering wider access to aligned AI technology.

\section*{Acknowledgement}
The authors would like to thank Shawn Im for their valuable comments on the manuscript. Leitian Tao and Sharon Li are supported in part by the AFOSR Young Investigator Program under award number FA9550-23-1-0184, National Science Foundation under awards IIS-2237037 and IIS-2331669, Office of Naval Research under grant number N00014-23-1-2643, Schmidt Sciences Foundation, Open Philanthropy, Alfred P. Sloan Fellowship, and gifts from Google and Amazon. Xuefeng Du is supported by the start-up grant (SUG) at NTU CCDS.

\bibliographystyle{unsrt}
\bibliography{egbib}
\appendix
\newpage
\begin{center}
    \LARGE \textbf{Appendix}
    \vspace{1em}
\end{center}
\tableofcontents
\addtocontents{toc}{\protect\setcounter{tocdepth}{2}}
\section{Experiments}
\subsection{Experimental Details}
\label{app:details}
\textbf{Experimental setup.} Our Variational Autoencoder (VAE) utilized a 2-layer MLP for both its encoder and decoder, with hidden dimensions of 64 and a latent dimension of 16. The encoder mapped 4096-dimensional LLM embeddings to this latent space, while the decoder reconstructed them back into the original embedding space. Although the core encoder architecture was shared for positive and negative embeddings, separate final projection layers were employed to parameterize their respective diagonal Gaussian posteriors. The VAE was trained for 100 epochs using the Adam optimizer with a learning rate of 1e-4 and a batch size of 128. The divergence loss weight $\gamma$ (see Section~\ref{sec:vae_training}) was set to 0.1. For latent space synthesis (see Section~\ref{sec:synthesis}), we applied perturbations using a noise variance of $\sigma_{\text{noise}}^2 = 0.01$. The embedding-based reward model, a two-layer MLP with a hidden dimension of 256, was trained with a learning rate of 1e-4 for up to 20 epochs, employing an early stopping mechanism with a patience of 5 epochs. All experiments were conducted on NVIDIA A100 GPUs. For the textual synthesis baseline, its reward model was trained using the complete training datasets for HH-RLHF and TL;DR, each comprising over 100,000 samples based on Llama-3.1-8B.  The well-trained reward model ranks the different sampled responses based on the reward score, and we select the top (preferred) and bottom (non‑preferred) to form a pair and sample different pairs multiple times for augmentation.

\paragraph{Experimental settings for rejection sampling.}
The Supervised Fine-Tuning (SFT) component of our rejection sampling experiments (Section~\ref{sec:experiments}) utilized a consistent set of hyperparameters for training SFT models on target responses selected by reward models, regardless of whether the reward models were derived from textual or latent synthetic data. Specifically, we trained for 1 epoch with a learning rate of $1 \times 10^{-5}$, a batch size of 32, 1 gradient accumulation step, and a maximum sequence length of 512 tokens. We employed DeepSpeed Zero stage 2 and performed full fine-tuning. These settings were used for training SFT models with targets selected by reward models based on textual or latent synthetic data, as described in Section~\ref{sec:experiments}.
{\paragraph{Selection of top-$k$ synthetic latent embeddings.}
For each original latent vector, several candidate latent embeddings are generated by adding Gaussian noise. Each candidate is then assigned a likelihood score according to how likely it is under the VAE’s learned Gaussian distribution, where embeddings closer to the mean obtain higher scores. Finally, from all candidates, we select the top-$k$ with the highest likelihoods, ensuring that only the most plausible latent embeddings—those lying in high-density regions of the latent space—are retained and subsequently decoded into synthetic embeddings.}
\subsection{Additional Ablations}
\label{app:ablation}
We conduct further ablations based on the Llama-3.1-8B for the 1,000 samples from HH-RLHF, the augmentation is 4$\times$.

\begin{figure}[t]
    \centering
    \includegraphics[width=0.8\textwidth]{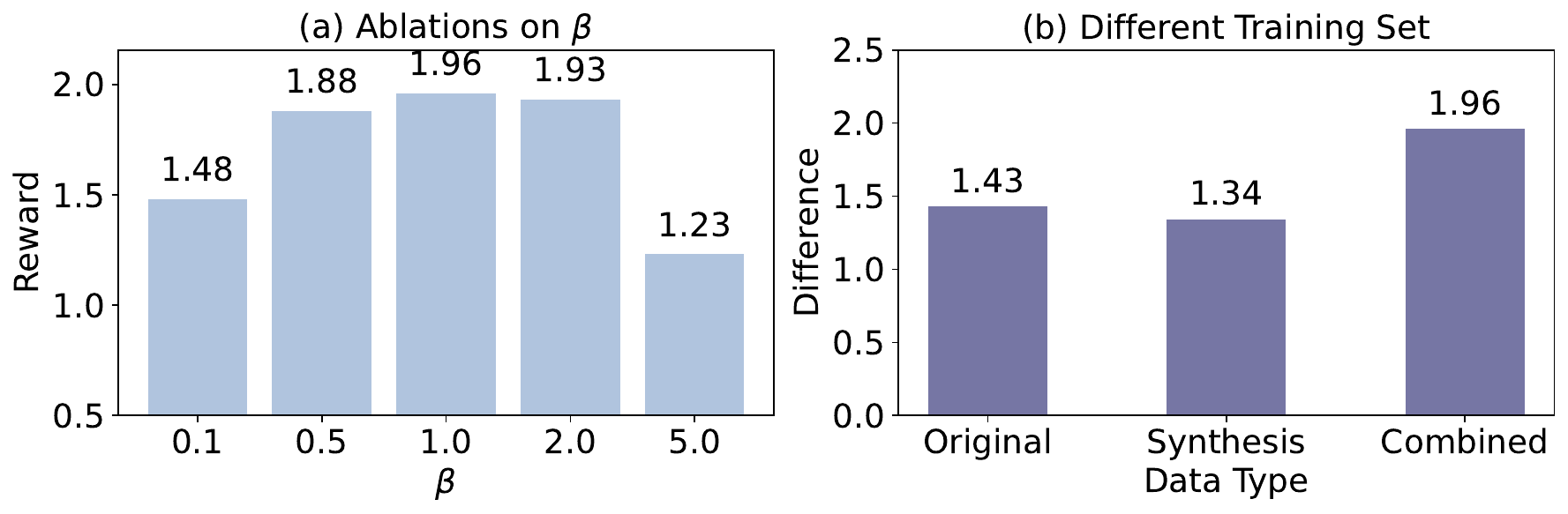}
    \caption{(a) Ablation on the KL divergence weight $\beta$. (b) Performance using original, synthetic, or combined training data.}
    \label{fig:ablation_comparison}
    \vspace{-1em}
\end{figure}
\paragraph{Ablations on $\beta$.}
We perform an ablation study on the hyperparameter $\beta$, which weights the KL divergence term in the VAE loss function, as defined in Section~\ref{sec:methods}. This term regularizes the latent space by encouraging the posterior distributions $q_\phi(\mathbf{z}|\mathbf{e})$ to match the prior $p(\mathbf{z})$. Varying $\beta$ adjusts the trade-off between reconstruction accuracy ($\mathcal{L}_{\text{recon}}$) and latent space regularization ($D_{\text{KL}}$). We examine how different values of $\beta$ influence the performance of the final reward model trained on the augmented data. Figure~\ref{fig:ablation_comparison}(a) presents the results for $\beta$ values of 0.1, 0.5, 1.0, 2.0, and 5.0. The performance peaks at $\beta = 1.0$, yielding a reward score of 1.96, suggesting an optimal balance between reconstruction and regularization. Performance degrades for lower values, with $\beta=0.1$ resulting in a reward of 1.48, and for higher values, where $\beta=5.0$ leads to a reward of 1.23. Other values such as $\beta=0.5$ and $\beta=2.0$ also show strong results, achieving rewards of 1.88 and 1.93 respectively, though these are slightly lower than the peak.

\textbf{Performance using original, synthetic, or
combined training data.} Figure~\ref{fig:ablation_comparison}b demonstrates that combining original and synthetic data yields the highest reward score (1.96), outperforming models trained on either original data only or synthetic data only. This finding confirms our theoretical prediction in Theorem~\ref{the:main2}, where the error bound shows that augmented data can reduce statistical error while maintaining the benefits of the original samples. The combined approach achieves an optimal balance between exploration of the latent manifold through synthetic samples and preservation of the original data distribution, resulting in a more robust reward model that better captures human preferences.

\begin{table}[h]
    \centering
    \begin{tabular}{lcccc} 
        \toprule
        & \multicolumn{2}{c}{Textual Space Synthesis} & \multicolumn{2}{c}{Latent Space Synthesis} \\ 
        \cmidrule(lr){2-3} \cmidrule(lr){4-5} 
        Model & Original & Augmented (4x) & Original & Augmented (4x) \\ 
        \midrule
        Gemma-2B & 0.35 & 0.55 & 0.29 & \textbf{0.64} \\ 
        Llama-3.2-3B & 0.44 & 0.58 & 0.42 & \textbf{0.73} \\ 
        Mistral-7B & 1.36 & 1.61 & 1.31 & \textbf{1.78} \\
        Qwen-2.5-7B & 1.24 & 1.49 & 1.19 & \textbf{1.62} \\
        Llama-3.1-8B & 1.49 & 1.78 & 1.43 & \textbf{1.96} \\ 
        {Qwen-2.5-14B} & 1.32 & 1.74 & 1.27 & \textbf{2.05} \\ 
        \bottomrule
    \end{tabular}
    \caption{Reward model performance across different base models. We compare the original data baseline against 4$\times$ augmentation using textual and latent space synthesis. Latent space synthesis consistently yields the highest rewards.}
    \label{tab:model_results} 
    \vspace{-2em}
\end{table}
\paragraph{Results based on different LLMs.} To demonstrate the generalizability and robustness of our latent space synthesis method, we conducted evaluations across a diverse range of base language models, encompassing various architectures and parameter scales. The results, detailed in Table~\ref{tab:model_results}, consistently highlight the superiority of our approach. For each model tested—Gemma-2B, Llama-3.2-3B, Mistral-7B, Qwen-2.5-7B, Llama-3.1-8B, and Qwen-2.5-14B — reward models trained with data augmented via latent space synthesis (4$\times$ augmentation) significantly outperformed those trained using only the original dataset or data augmented via textual synthesis. For example, with the Llama-3-8B model, latent space synthesis achieved a reward score of 1.96, compared to 1.78 for textual synthesis and 1.43 for the original data. This consistent pattern of improvement across different LLMs underscores the broad applicability and effectiveness of our proposed synthesis technique for enhancing reward model performance.

\label{app:exp_setting}
\begin{table}[h]
    \centering
    \scalebox{0.9}{\begin{tabular}{lcc}
        \toprule
        Configuration & Encoder/Decoder Sharing & Reward \\
        \midrule
        Separate encoders/decoders and distribution parameters & No & 1.35 \\
        Shared encoder/decoder with separate distribution parameters & Yes & \textbf{1.96} \\
        Complete parameter sharing & Yes & 1.47 \\
        \bottomrule
    \end{tabular}}
    \vspace{1em}
    \caption{Parameter sharing ablation in VAE model configurations. "Yes" indicates shared parameters between positive/negative paths, while "No" indicates non-shared parameters.}
     \vspace{-1em}
    \label{tab:vae_ablation}
\end{table}

\paragraph{Architectural ablation on VAE.}We ablate parameter-sharing configurations in our VAE model. Table~\ref{tab:vae_ablation} shows three configurations: (1) separate encoders/decoders and distribution parameters (reward 1.35), (2) shared encoder/decoder with separate distribution parameters (optimal, reward 1.96), and (3) complete parameter sharing (reward 1.47). These results demonstrate that balancing shared representation learning with path-specific distribution modeling yields the best performance for preference learning.

\begin{table}[h]
\centering
\begin{tabular}{l c}
\toprule
\textbf{Textual space synthesis} & \textbf{Gold reward} \\
\midrule
\hspace{1em}\textit{Temperature = 0.6} & 1.72 \\
\hspace{1em}\textit{Temperature = 1.0} & 1.78 \\
\hspace{1em}\textit{Temperature = 1.2} & 1.63 \\
\midrule
\textbf{Latent space synthesis} & \textbf{Gold reward} \\
\hspace{1em}\textit{Ours} & \textbf{1.94} \\
\bottomrule
\end{tabular}
\caption{Effect of temperature for textual synthesis.}
\end{table}

{\paragraph{Effect of temperature for textual synthesis.}In our experiments, we used temperature $1.0$ for textual response generation. 
To further understand the effect of this hyperparameter, we conduct an ablation study with varying temperatures $\{0.6, 1.0, 1.2\}$ on Llama-3.1-8B trained with HH-RLHF and a default augmentation scale of $4\times$. We find that textual synthesis achieves a gold reward of $1.72$ at temperature $0.6$, $1.78$ at temperature $1.0$, and $1.63$ at temperature $1.2$, showing that $1.0$ is indeed the best setting among the textual baselines. In contrast, our latent space synthesis attains a gold reward of \textbf{1.94}, consistently outperforming all textual synthesis variants regardless of the temperature choice. This demonstrates that while textual synthesis is somewhat sensitive to temperature, latent space synthesis is both more robust and more effective.}
\subsection{Qualitative Examples}
To demonstrate the effectiveness of our approach on TL;DR summarization using Llama-3.1-8B-Instruct, we present three qualitative examples. For each test question, we generate candidate responses using Best-of-N sampling ($n$=16) and select the highest-scoring response according to two reward models: one reward model trained with the original dataset (1,000 samples) and another trained with an augmented dataset using our method (8$\times$ original data). Each example is displayed in a gray-framed box containing the original question (with subreddit, title, and post content), followed by the best responses selected by both models. We also provide gold reward scores from the Skywork~\cite{liu2024skywork} reward model as a reference metric. The examples demonstrate that the reward model trained on augmented data consistently selects higher quality responses compared to the baseline model trained only on original data.

\begin{tcolorbox}[colback=white!95!gray, colframe=black, title=Qualitative Examples I, fonttitle=\bfseries]
\textbf{Prompt:}
SUBREDDIT: r/relationships\\
TITLE: M18 with crush on F18 and wanting to ask her to a movie but keeps chickening out?\\
POST: Okay, So I'm a senior in high school, and so I'll apologize now if this is not the best sub to post this in and I will willingly move it if needed.

That being said, let me give some backstory: I've had an off and on crush on this girl for almost three years. I've gotten to the point I've wanted to ask her out two years ago, and I tried by asking her first if she had a boyfriend, to which she responded yes, and then I just got nervous/disappointed and just went "oh, okay." and walked away without another word. I haven't spoken to her much since thanks to me being nervous and still awkward about that whole incident.

However, now, she's single and I really want to take a shot and ask her to a movie, however, every time I see her in the hall, I chicken out and just keep walking by. I can't even get the conversation started. How do I overcome this fear and is she even going to really hold the whole awkward moment against me? And what's even the best way to ask a girl out like that? I've had girlfriends before, but I've managed to either just date people I've already known, or they ask me. 

TL;DR:

\textbf{Best answer picked by RM trained on augmented dataset:}\\
M18 with long crush on F18, wanting to ask her to a movie but too nervous. Should I even bother or is it worth risking being rejected?

\textcolor{blue!30}{\textit{Gold reward of the answer: 3.4}}\\
\textbf{Best answer picked by RM trained on original dataset:}\\
M trying to ask N girl to movie but keeps chickening out due to previous awkward encounter

\textcolor{blue!30}{\textit{Gold reward of the answer: -0.3}}
\end{tcolorbox}
    \begin{tcolorbox}[colback=white!95!gray, colframe=black, title=Qualitative Examples II, fonttitle=\bfseries]
        \textbf{Prompt:}
        SUBREDDIT: r/legaladvice\\TITLE: Need to verify if a housing law in Portland, OR is correct or has been turned into a "law" when it's really a guideline.\\POST: So I'm on Facebook asking if there's a place to rent in the Portland, OR housing classifieds.  We have a few kids, enough that people think that you cannot put more than 2 kids per room for, say, a two bedroom house.  There seems to be a section in the housing code that they should follow the "2+1" rule for renting.  However, when reading the code, this seems to be even more of a guideline than a rule, and is even mentioned as such by the mayor in a PDF I found dated a few years ago.\\With housing so tight and people failing to find housing for one reason or another (and I plan to move my family up in a couple of months.  I work in Portland right now), it would be nice to be able to come at this problem to show landlords that this isn't a law but a guideline, and should be treated as such. TL;DR:
        
        \textbf{Best answer picked by RM trained on augmented dataset:}\\
        we're planning to rent a place and I want to know if the 2+1 rule is a law or a guideline.  If it isn't a law I don't want to be taking it as one, so help me figure it out!", " Stuff to do.
        
        \textcolor{blue!30}{\textit{Gold reward of the answer: -0.8}}\\
        \textbf{Best answer picked by RM trained on original dataset:}\\
        Is the "2+1" rule for rental units a guideline or law by any local housing code or rule set in Portland?", " is this description of code from the city of Portland accurate? And can I use this to not have to rent apartments that only follow the "2+1" rule?
        
        \textcolor{blue!30}{\textit{Gold reward of the answer: -2.3}}
        \end{tcolorbox}

    \begin{tcolorbox}[colback=white!95!gray, colframe=black, title=Qualitative Examples III, fonttitle=\bfseries]
        \textbf{Prompt:}
SUBREDDIT: r/books\\TITLE: I can't believe it took me 23 years to read The Count of Monte Cristo!\\POST: I'm a big fan of classics, so when I found out my wife's favorite book ever was The Count of Monte Cristo, I was intrigued as to why. Of course I had heard of it before back in literature class in high school, but I never had been motivated to pick it up and read it. I guess I thought I'd be missing out on several books by reading this one instead, since, unabridged, it clocks in at over 1200 pages. Let me just say, I am so glad I finally worked up the courage to start, because, once I did, I couldn't put it down. I was dead to the rest of the world. The story is absolutely beautiful, and I really feel like Edmond Dantes exemplified a great literary hero we can all appreciate. Also, I have really enjoyed getting to know my wife better by reading and loving her favorite book. I don't mean to ramble, but after finishing The Count, I felt impelled to post to encourage anyone out there to pick it up and read it. It might seem like you aren't getting anywhere at first, and even in the middle, but the plot will unfold beautifully, and you will be left extremely satisfied, and probably sad the story is only 1200 pages. You will not be disappointed!\\\\WAIT AND HOPE! TL;DR:

 \textbf{Best answer picked by RM trained on augmented dataset:}\\
Just finished reading The Count of Monte Cristo and I wish I'd read it years ago

        \textcolor{blue!30}{\textit{Gold reward of the answer: 3.8}}\\
        \textbf{Best answer picked by RM trained on original dataset:}\\
Finally read The Count of Monte Cristo-the wife was loving it so much, I HAD to! Amazing classic, must-read! TOP RESPONSES FROM REDDITORS: 1 Comment:
        
        \textcolor{blue!30}{\textit{Gold reward of the answer: 1.2}}
        \end{tcolorbox}

\section{Theoretical Analysis}
\label{app:thm}
\subsection{Definitions and Assumptions}
\label{sec:assp_app}
\begin{definition*}[$L_g$-Lipschitz] A function $g: \mathbb{R}^{d_0} \to \mathbb{R}^d$ is considered $L_g$-Lipschitz if there exists a constant $L_g>0$ such that for all $\mathbf{z}_1, \mathbf{z}_2 \in \mathbb{R}^{d_0}$:
\begin{equation}
    \| g(\mathbf{z}_1) -g(\mathbf{z}_2)\| \leq L_g \| \mathbf{z}_1 - \mathbf{z}_2\|.
\end{equation}
\end{definition*}

\begin{definition*}[$\alpha$-Hölder continuous] We say a function $r: \mathbb{R}^d \to \mathbb{R}$ is $\alpha$-Hölder continuous with Hölder exponent $\alpha \in (0,1]$ and Hölder constant $L_r > 0$ if:
\begin{equation}
    | r(\mathbf{e}_1) -r(\mathbf{e}_2)| \leq L_r \| \mathbf{e}_1 - \mathbf{e}_2\|^\alpha. ~~~ \forall \mathbf{e}_1,\mathbf{e}_2 \in \mathbb{R}^d.
\end{equation}
When $\alpha=1$, $\alpha$-Hölder continuous condition reduces to the Lipschitz condition.
\end{definition*}

\begin{assumption}
$~~~~$
    \begin{itemize}
        \item Parameter space for the reward MLP $\mathcal{R}_o \subset B(r_o, s_1)$ ($\ell_2$ ball of radius $s_1$ around $r_o$);
        \item Parameter space for the VAE decoder $\Theta \subset B(\theta_0, s_2)$ ($\ell_2$ ball of radius $s_2$ around $\theta_0$);
        \item The best possible reward function $r^*_o$ is $\alpha$-Hölder continuous with a Hölder constant of $L_r$;
        \item The VAE decoder $g_{\theta}$ is $L_g$-Lipschitz, $L_g$ is w.r.t. the latent-space norm;
        \item $\sup_{(\mathbf{z},\mathbf{e})\in \mathcal{Z}\times \mathcal{E}} \|g_{\theta}({q}_\phi(\mathbf{z}|\mathbf{e})) - \mathbf{e}\|\leq \epsilon_{\mathrm{rec}}$;
        \item $\epsilon_{\mathrm{rec}} = \mathcal{O} (N^{-p}), p>0$ and $N$ is the original  preference dataset size. 
    \end{itemize}
    \label{ass:1}
\end{assumption}
 \begin{Remark} For neural networks with 1-Lipschitz activation functions such as ReLU, we can check that they are continuous w.r.t. the inputs, given a bounded parameter space and a finite architecture width and depth.  Moreover, the decay rate assumption on the reconstruction error term is widely adopted and verified as the scaling law in literature~\cite{kaplan2020scaling,hestness2017deep}. Therefore, our assumptions are reasonable in practice.
 \end{Remark}

\subsection{Main Theorems}
\label{sec:main_theorem_app}
In this section, we provide a detailed and formal version of our main theorems with a complete
description of the constant terms and other additional details that are omitted in the main paper.
\begin{Theorem}[Formal]
    Under Assumption~\ref{ass:1},  with probability at least $1-\delta$, for any synthesized preference embedding pairs $(\hat{\mathbf{e}}^+, \hat{\mathbf{e}}^-)$, their reward difference when evaluated by the best possible reward MLP function $r^*_o$ is lower bounded by
    \begin{equation}
        r^*_o(\hat{\mathbf{e}}^+) - r^*_o(\hat{\mathbf{e}}^-)
\;\ge\;
r^*_o({\mathbf{e}}^+) - r^*_o({\mathbf{e}}^-)
\;-\;
2L_r(L_g t_\delta + \epsilon_{\mathrm{rec}})^\alpha,
\;
\label{eq:thm-1-final}
    \end{equation}
    where $L_r > 0$ is the Hölder constant for the best possible reward MLP function $r^*_o$, $\alpha\in (0,1]$ is the Hölder exponent, and $L_g > 0$ is the Lipschitz constant of the VAE decoder $g_{\theta}$. $({\mathbf{e}}^+, {\mathbf{e}}^-)$ is the corresponding original preference embedding pair from which $(\hat{\mathbf{e}}^+, \hat{\mathbf{e}}^-)$ is synthesized.  Moreover, 
    \begin{equation}
        t_\delta \triangleq \sigma_{\textrm{noise}}\sqrt{d_{\textrm{VAE}} + 2\sqrt{d_{\textrm{VAE}}\ln(4/\delta)} + 2\ln(4/\delta)}.
    \end{equation}
   where
    $d_{\mathrm{VAE}}$ is the dimension of the VAE latent space. $\sigma_{\textrm{noise}}$ is the noise magnitude added to the latent space of the VAE defined in Equation~\ref{eq:synthesis}.
    \label{thm:1-app}
\end{Theorem}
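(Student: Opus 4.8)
\textbf{Proof proposal for Theorem~\ref{thm:1-app}.}

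The plan is to control the deviation between each synthesized embedding $\hat{\mathbf{e}}^\pm$ and its original counterpart $\mathbf{e}^\pm$, and then transfer that control through the Hölder continuity of $r_o^*$. First I would decompose the error into two pieces. Recall from Equation~\ref{eq:synthesis} that $\hat{\mathbf{e}} = g_\theta(\mathbf{z} + \boldsymbol{\eta})$ where $\mathbf{z} \sim q_\phi(\cdot \mid \mathbf{e})$ and $\boldsymbol{\eta} \sim \mathcal{N}(0, \sigma_{\mathrm{noise}}^2 \mathbf{I})$. Writing $\|\hat{\mathbf{e}} - \mathbf{e}\| \le \|g_\theta(\mathbf{z} + \boldsymbol{\eta}) - g_\theta(\mathbf{z})\| + \|g_\theta(\mathbf{z}) - \mathbf{e}\|$, the second term is the reconstruction error bounded by $\epsilon_{\mathrm{rec}}$ via the last bullet of Assumption~\ref{ass:1}, and the first term is handled by the $L_g$-Lipschitz property of the decoder, giving $\|g_\theta(\mathbf{z} + \boldsymbol{\eta}) - g_\theta(\mathbf{z})\| \le L_g \|\boldsymbol{\eta}\|$. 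Thus $\|\hat{\mathbf{e}} - \mathbf{e}\| \le L_g \|\boldsymbol{\eta}\| + \epsilon_{\mathrm{rec}}$.

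Next I would produce a high-probability bound on $\|\boldsymbol{\eta}\|$. Since $\|\boldsymbol{\eta}\|^2 / \sigma_{\mathrm{noise}}^2$ is a chi-squared random variable with $d_{\mathrm{VAE}}$ degrees of freedom, I would invoke a standard chi-squared concentration inequality (e.g., the Laurent--Massart tail bound) to show that with probability at least $1 - \delta/2$ per sample, $\|\boldsymbol{\eta}\| \le t_\delta$ with $t_\delta = \sigma_{\mathrm{noise}}\sqrt{d_{\mathrm{VAE}} + 2\sqrt{d_{\mathrm{VAE}}\ln(4/\delta)} + 2\ln(4/\delta)}$, which is exactly the quantity appearing in the theorem. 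Applying this to both the positive and negative perturbations $\boldsymbol{\eta}^+$ and $\boldsymbol{\eta}^-$ and taking a union bound over the two events yields, with probability at least $1 - \delta$, that both $\|\hat{\mathbf{e}}^+ - \mathbf{e}^+\| \le L_g t_\delta + \epsilon_{\mathrm{rec}}$ and $\|\hat{\mathbf{e}}^- - \mathbf{e}^-\| \le L_g t_\delta + \epsilon_{\mathrm{rec}}$.

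Finally I would translate these embedding-space bounds into a reward-gap bound. By the $\alpha$-Hölder continuity of $r_o^*$ (third bullet of Assumption~\ref{ass:1}), each of $|r_o^*(\hat{\mathbf{e}}^+) - r_o^*(\mathbf{e}^+)|$ and $|r_o^*(\hat{\mathbf{e}}^-) - r_o^*(\mathbf{e}^-)|$ is at most $L_r (L_g t_\delta + \epsilon_{\mathrm{rec}})^\alpha$. Then I would write
\begin{equation}
r_o^*(\hat{\mathbf{e}}^+) - r_o^*(\hat{\mathbf{e}}^-) = \bigl(r_o^*(\mathbf{e}^+) - r_o^*(\mathbf{e}^-)\bigr) + \bigl(r_o^*(\hat{\mathbf{e}}^+) - r_o^*(\mathbf{e}^+)\bigr) - \bigl(r_o^*(\hat{\mathbf{e}}^-) - r_o^*(\mathbf{e}^-)\bigr),
\end{equation}
and lower-bound the last two terms by $-L_r(L_g t_\delta + \epsilon_{\mathrm{rec}})^\alpha$ each, yielding the claimed inequality in Equation~\ref{eq:thm-1-final}. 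The main obstacle I anticipate is getting the chi-squared concentration constants to line up exactly with the stated form of $t_\delta$ and correctly budgeting the failure probability $\delta$ across the two union-bound events; the Lipschitz and Hölder steps are routine once the norm bound is in hand. A minor subtlety is ensuring the Lipschitz/Hölder constants are applied with respect to the correct norms (latent-space norm for $g_\theta$, embedding-space norm for $r_o^*$), which the assumptions have been stated to accommodate.
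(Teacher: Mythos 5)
Your proposal is correct and follows essentially the same route as the paper's proof: the same triangle-inequality decomposition into a Lipschitz-controlled noise term $L_g\|\boldsymbol{\eta}\|$ plus the reconstruction error $\epsilon_{\mathrm{rec}}$, the same chi-squared (Laurent--Massart) concentration giving $\|\boldsymbol{\eta}\|\le t_\delta$ with probability $1-\delta/2$, a union bound over the two perturbations, and the $\alpha$-H\"older step applied to each of $\hat{\mathbf{e}}^+$ and $\hat{\mathbf{e}}^-$ to yield the $2L_r(L_g t_\delta+\epsilon_{\mathrm{rec}})^\alpha$ slack. If anything, your write-up is slightly cleaner than the paper's in making the $\delta/2$-per-event budgeting and the norm conventions explicit.
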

$~~~~~$
\begin{Theorem}[Formal]
Suppose the original preference dataset has a size of $N$, let $kN$ be the size of the augmented preference dataset, if the reconstruction error in Theorem~\ref{thm:1-app} decays as $\epsilon_{\rm rec} = \mathcal{O}\!\bigl( N^{-p}\!\bigr) $ where $p>0$, and with probability at least $1-\delta_1$, if we further require $ N > \left( C  \sqrt{d + \log(1/\delta_1)}\right)^{\frac{1}{1/2 - p}}$, then there always exists a constant $k_0>1$ such that when $k\geq k_0$, the following estimation error condition of the reward model hold:
\begin{equation}
    \zeta_{\mathcal{E}_{\textrm{aug}}} < \zeta_{\mathcal{E}},
    \end{equation}
where $C > 0$ is a constant that is related to the properties of the hypothesis space of the VAE decoder $g_{\theta}$ and the reward MLP function $r_o$. $d$ is the dimension of the LLM embeddings.
 \label{thm:2-app}
\end{Theorem}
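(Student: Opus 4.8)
The plan is to prove Theorem~\ref{thm:2-app} by decomposing the estimation error $\zeta_{\mathcal{E}}$ into a statistical (generalization) component and a bias (approximation) component, and then showing that augmentation trades a larger effective sample size against a controlled bias introduced by synthesis. First I would invoke a standard uniform-convergence bound for the empirical risk minimizer $\hat{r}_{\mathcal{E}}$: since $\mathcal{R}_o$ is contained in an $\ell_2$ ball of radius $s_1$ (Assumption~\ref{ass:1}) and the loss $-\log\sigma(\cdot)$ is Lipschitz and bounded on this class, Rademacher or covering-number arguments give $\zeta_{\mathcal{E}} \;\le\; \mathcal{O}\!\bigl(\sqrt{(d + \log(1/\delta))/N}\bigr)$ with probability at least $1-\delta_1$. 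The dependence on $d$ (the embedding dimension) enters through the complexity of the MLP hypothesis class.

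Next I would write the analogous bound for $\zeta_{\mathcal{E}_{\textrm{aug}}}$, where the augmented set has $kN$ pairs. The key subtlety is that the synthetic pairs are not drawn i.i.d.\ from $\mathcal{P}_e$: they are generated by the VAE decode-perturb procedure, so they carry a distributional bias relative to the true preference distribution. I would quantify this bias using Theorem~\ref{thm:1-app}: each synthesized pair preserves the original reward margin up to $2L_r(L_g t_\delta + \epsilon_{\mathrm{rec}})^\alpha$, which controls how far the synthetic loss deviates from the true loss. Plugging $\epsilon_{\mathrm{rec}} = \mathcal{O}(N^{-p})$ and noting $t_\delta = \mathcal{O}(\sigma_{\mathrm{noise}}\sqrt{d_{\mathrm{VAE}}})$ is a fixed small constant, the bias term is $b = \mathcal{O}(N^{-p\alpha}) + \mathcal{O}(\sigma_{\mathrm{noise}}^{\alpha})$. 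The generalization term for the augmented estimator then scales like $\mathcal{O}\!\bigl(\sqrt{(d + \log(1/\delta_1))/(kN)}\bigr)$, so schematically
\begin{equation}
\zeta_{\mathcal{E}_{\textrm{aug}}} \;\le\; \mathcal{O}\!\Bigl(\sqrt{\tfrac{d + \log(1/\delta_1)}{kN}}\Bigr) + b, \qquad \zeta_{\mathcal{E}} \;\ge\; c\,\sqrt{\tfrac{d + \log(1/\delta_1)}{N}},
\end{equation}
where the lower bound on $\zeta_{\mathcal{E}}$ comes from a minimax / matching-rate argument for the un-augmented estimator.

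The final step is the comparison. Dividing, the inequality $\zeta_{\mathcal{E}_{\textrm{aug}}} < \zeta_{\mathcal{E}}$ reduces to requiring the augmented statistical term plus bias to fall below the original statistical term, i.e.\ $\mathcal{O}(1/\sqrt{k}) \cdot \sqrt{(d+\log(1/\delta_1))/N} + b < c\sqrt{(d+\log(1/\delta_1))/N}$. As $k$ grows the first term vanishes, so it suffices that the bias $b$ be strictly smaller than the original statistical error, which is exactly where the hypothesis $N > \bigl(C\sqrt{d+\log(1/\delta_1)}\bigr)^{1/(1/2-p)}$ enters: this condition guarantees $N^{-p\alpha}$ decays faster than $N^{-1/2}$ relative to the $\sqrt{d+\log(1/\delta_1)}$ factor, making the bias asymptotically negligible against the original generalization gap. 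Then I would pick $k_0$ large enough that for all $k \ge k_0$ the augmented bound dips below $\zeta_{\mathcal{E}}$, yielding the claim.

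I expect the main obstacle to be handling the non-i.i.d.\ and dependent structure of the augmented dataset $\mathcal{E}_{\textrm{aug}}$: because synthetic pairs are formed compositionally from a shared pool of decoded embeddings, they are statistically dependent, so the naive $1/\sqrt{kN}$ rate is not immediate and the effective sample size may be smaller than $kN$. I would address this by treating the synthetic samples as a bias-controlled perturbation of the empirical distribution rather than as fresh independent draws—bounding the excess risk through the margin-preservation guarantee of Theorem~\ref{thm:1-app} instead of a pure concentration argument—so that the dependence is absorbed into the deterministic bias term $b$ rather than inflating the variance. Making the constant $k_0$ and the threshold on $N$ mutually consistent (so that both the "bias is small" and "variance is reduced" regimes hold simultaneously) is the delicate bookkeeping that ties the argument together.
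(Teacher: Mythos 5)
Your overall architecture is the same as the paper's: bound $\zeta_{\mathcal{E}}$ by $C_1\sqrt{(d+\log(1/\delta_1))/N}$ via standard learning theory, split $\zeta_{\mathcal{E}_{\mathrm{aug}}}$ into a statistical term $C_1\sqrt{(d+\log(1/\delta_1))/(kN)}$ plus a synthesis bias, control that bias through the $\varepsilon$-closeness of synthetic to original embeddings together with the H\"older continuity of $r_o^*$ (the paper packages this as a $W_1$ coupling lemma between the empirical measures, giving $\mathcal{L}_{\mathrm{synth}}\le \frac{k-1}{k}B_0 N^{-p}$), then let $k\to\infty$ and extract $k_0$ by continuity of the error ratio in $k$. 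Your instinct to absorb the dependence among compositionally-paired synthetic samples into a deterministic bias rather than a concentration argument is exactly what the paper's deterministic coupling $T(\mathbf{e}_i)=\hat{\mathbf{e}}_i$ in its Wasserstein lemma accomplishes.

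There are, however, two genuine problems in your final step. First, you retain the noise contribution $\mathcal{O}(\sigma_{\mathrm{noise}}^{\alpha})$ in the bias $b$; since this term is constant in both $N$ and $k$, your required comparison $b < c\sqrt{(d+\log(1/\delta_1))/N}$ silently needs $\sigma_{\mathrm{noise}}$ small relative to $N^{-1/2}$, a condition absent from the theorem. The paper sidesteps this by assuming in its lemma that the \emph{total} per-sample perturbation obeys $\|\hat{\mathbf{e}}_i-\mathbf{e}_i\|\le\varepsilon_N=\mathcal{O}(N^{-p})$, i.e., the $L_g t_\delta$ piece is folded into the $N^{-p}$ rate. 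Second, your stated mechanism for the sample-size condition is backwards: for the exponent $\frac{1}{1/2-p}$ to make sense one needs $p<1/2$, in which case $N^{-p\alpha}$ decays \emph{slower} than $N^{-1/2}$, not faster, and the hypothesis $N>\bigl(C\sqrt{d+\log(1/\delta_1)}\bigr)^{\frac{1}{1/2-p}}$ rearranges to $N^{-p}\gtrsim\sqrt{(d+\log(1/\delta_1))/N}$ — in the stated regime the bias is \emph{not} asymptotically negligible against the original statistical error, so "the bias vanishes faster" cannot be the argument. In the paper the condition instead arises from requiring the limiting ratio $\rho N^{1/2-p}<1$ with $\rho=B_0/(C_1\sqrt{d+\log(1/\delta_1)})$, the limit of $\sqrt{1/k}+\frac{k-1}{k}\rho N^{1/2-p}$ as $k\to\infty$ (and, read strictly, solving that inequality with $1/2-p>0$ yields an upper rather than a lower bound on $N$, a direction subtlety present in the paper's own algebra that your narrative inherits). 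Finally, your appeal to a minimax lower bound $\zeta_{\mathcal{E}}\ge c\sqrt{(d+\log(1/\delta_1))/N}$ is not available pointwise for a fixed distribution $\mathcal{P}_e$ and a fixed ERM; the paper makes no such claim and instead compares the two upper bounds directly (written with an $\approx$), which is weaker than the literal conclusion $\zeta_{\mathcal{E}_{\mathrm{aug}}}<\zeta_{\mathcal{E}}$ but is the argument actually given.
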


\subsection{Proof}
\label{sec:proof_app}

\begin{proof}[Proof of Theorem \ref{thm:1-app}]

Firstly, we have the following:
\begin{equation}
\bigl\|\hat{\mathbf e}^\pm - \mathbf e^\pm\bigr\| = \|\hat{\mathbf e}^\pm - g_{\theta}(\mathbf{z}^\pm)  + g_{\theta}(\mathbf{z}^\pm) -\mathbf e^\pm \|,
\label{eq:tri_eq_1}
\end{equation}
where $\mathbf{z}^\pm$ is the corresponding latent representation of VAE for the LLM embedding $\mathbf{e}^\pm$.

Since the decoder \(g_{\theta}\) is \(L_g\)-Lipschitz with reconstruction error upper bound \(\epsilon_{\mathrm{rec}}\), for valid noise realizations, it is easy to have:
\begin{equation}
\|\hat{\mathbf e}^\pm - g_{\theta}(\mathbf{z}^\pm) \|\;\leq\;
L_g \Delta,  
\label{eq:tri_eq_2}
\end{equation}
where $\Delta$ represents the norm of the added Gaussian noise in the VAE latent space for the LLM embedding $\mathbf{e}$. Moreover, based on item 5 in Assumption~\ref{ass:1}, we can check that:
\begin{equation}
    \|g_{\theta}(\mathbf{z}^\pm) -\mathbf e^\pm \| \leq \epsilon_{\textrm{rec}}.
    \label{eq:tri_eq_3}
\end{equation}

The random variable $\|\Delta\|$ follows a  (one–parameter) $\chi_{d_{\textrm{VAE}}}$ distribution. Concretely, its density is
\begin{equation}
    f(\|\Delta\|)=\frac{1}{\sigma_{\textrm{noise}}} \frac{1}{2^{\frac{d_{\textrm{VAE}}}{2}-1} \Gamma\left(\frac{d_{\textrm{VAE}}}{2}\right)}\left(\frac{\|\Delta\|}{\sigma_{\textrm{noise}}}\right)^{d_{\textrm{VAE}}-1} e^{-\frac{\|\Delta\|^2}{2 \sigma_{\textrm{noise}}^2}}, \quad \|\Delta\| \geq 0,
\end{equation}
where $\Gamma(\cdot)$ denotes the gamma function. With the property of the chi-squared concentration for \(d_{\textrm{VAE}}\)-dimensional Gaussians, we can have the following result: for any \(\delta \in (0,1)\),
\begin{equation}
\mathbb{P}\left(\|\Delta\| \leq \sigma_{\textrm{noise}}\sqrt{d_{\textrm{VAE}} + 2\sqrt{d_{\textrm{VAE}}\ln(4/\delta)} + 2\ln(4/\delta)}\right) \geq 1-\delta/2.
\end{equation}
Let \(t_\delta \triangleq \sigma_{\textrm{noise}}\sqrt{d_{\textrm{VAE}} + 2\sqrt{d_{\textrm{VAE}}\ln(4/\delta)} + 2\ln(4/\delta)}\). By union bound, with probability \(\geq 1-\delta\):
\begin{equation}
\|\Delta\| \leq t_\delta.
\end{equation}
Putting Equations~\ref{eq:tri_eq_1},~\ref{eq:tri_eq_2} and~\ref{eq:tri_eq_3} together and applying the triangle inequality, we can get that: with probability \(\geq 1-\delta\),
\begin{equation}
    \bigl\|\hat{\mathbf e}^\pm - \mathbf e^\pm\bigr\|
\;\leq\;
L_g t_\delta + \epsilon_{\mathrm{rec}}.
\label{eq:gap_betwee_e+ande}
\end{equation}

By \(\alpha\)-Hölder continuity of the best possible reward MLP function \(r_o^*\), with probability \(\geq 1-\delta\):
\begin{equation}
    | r^*_o(\hat{\mathbf{e}}^\pm) - r^*_o({\mathbf{e}}^\pm) |  \leq L_r (L_g t_\delta + \epsilon_{\mathrm{rec}})^\alpha.
\end{equation}
Therefore, we have:
\begin{equation}
    r^*_o(\hat{\mathbf{e}}^+) - r^*_o({\mathbf{e}}^+) \geq -L_r (L_g t_\delta + \epsilon_{\mathrm{rec}})^\alpha,
\end{equation}
and 
\begin{equation}
  r^*_o({\mathbf{e}}^-) -  r^*_o(\hat{\mathbf{e}}^-)   \geq -L_r (L_g t_\delta + \epsilon_{\mathrm{rec}})^\alpha.
\end{equation}
Adding the two inequalities together above, we can get the final inequality that is the same as the inequality~\ref{eq:thm-1-final}. Thus, we finish the proof.
\end{proof}

\begin{proof}[Proof of Theorem \ref{thm:2-app}] According to the standard learning theory (Chapter 6 in~\cite{shalev2014understanding}), the estimation error $\zeta_{\mathcal{E}}$ of the reward model $\hat{r}_{\mathcal{E}}$ trained over dataset $\mathcal{E}$, i.e., $\zeta_{\mathcal{E}} = \mathcal{L}_{RM}^{\mathcal{P}_e}(\hat{r}_{\mathcal{E}})-  \mathcal{L}_{RM}^{\mathcal{P}_e}({r}^*_{o}) $,  as defined in Section~\ref{sec:4.2} can be bounded as follows: with probability $\geq 1-\delta_1$,
\begin{equation}
    \zeta_{\mathcal{E}} \leq C_1 \cdot \sqrt{\frac{d + \log(1/\delta_1)}{N}},
    \label{eq:ucb_bound}
\end{equation}
where $C_1 > 0$ is a constant related to the learning process and hypothesis class properties for the reward MLP function.

Based on this, since the augmented dataset $\mathcal{E}_{\textrm{aug}}$ consists of $N$ embeddings from the original preference dataset and $(k-1)N$ synthetic embeddings, we can rewrite the estimation error on the augmented dataset $\mathcal{E}_{\textrm{aug}}$ as follows:

\begin{equation}
   \zeta_{\mathcal{E}_{\textrm{aug}}} \le \underbrace{\mathcal{L}_{\text{stat}}}_{\text{Statistical Error}} + \underbrace{\mathcal{L}_{\text{synth}}}_{\text{Synthesis Bias}}.
\end{equation}

The first term, $\mathcal{L}_{\text{stat}}$, arises from the finite sample size \( kN \) of the augmented dataset, while the second term, $\mathcal{L}_{\text{synth}}$, captures the bias introduced because the synthetic data generator deviates from the original preference data distribution.

\paragraph{Statistical Error.}  Similar to inequality~\ref{eq:ucb_bound}, the statistical error is bounded: with probability $\geq 1-\delta_1$,
\begin{equation}
\mathcal{L}_{\text{stat}} \le C_1 \cdot \sqrt{\frac{d + \log(1/\delta_1)}{kN}}.
\end{equation}
Here, \( C_1 \) is the same constant as in the bound for \( \zeta_{\mathcal{E}} \).

\paragraph{Synthesis Bias.}
Denote $\delta_{\mathbf{e}}$ as the Dirac measure (unit point mass) at $\mathbf{e}$, let $\mu=\tfrac1N\sum_{i=1}^N\delta_{\mathbf e_i}$ and
$\hat\mu=\tfrac1N\sum_{i=1}^N\delta_{\hat{\mathbf e}_i}$ denote,
respectively, the empirical distributions of the original and the
synthetic embeddings.
Lemma~\ref{lem:w1_bound} shows that
their $1$-Wasserstein distance is
\(
  W_1(\hat\mu,\mu)\le\varepsilon_N=\mathcal O(N^{-p}),
\)
because every pair $(\hat{\mathbf e}_i,\mathbf e_i)$ is
$\varepsilon_N$–close in Euclidean norm (inequality~\ref{eq:gap_betwee_e+ande}).

The best possible reward MLP function $r_o^*$ is
$\alpha$-Hölder with constant $L_r$ and,
by item 1 in Assumption~\ref{ass:1}, its domain is contained in a compact ball
$B(r_0,s_1)$; hence $r_o^*$ is bounded on that set.
For any two probability measures $\\u_1,\\u_2$ supported inside
$B(r_0,s_1)$, Hölder continuity and Jensen’s inequality give
\begin{equation}
    \Bigl|\mathbb E_{\\u_1}r_o^*-\mathbb E_{\\u_2}r_o^*\Bigr|
    \;\le\;
    L_r\,W_1(\\u_1,\\u_2)^{\alpha}.
\end{equation}
Applying this to $(\hat\mu,\mu)$ yields
\begin{equation}
  \bigl|\mathbb E_{\hat\mu}r_o^*-\mathbb E_\mu r_o^*\bigr|
    \;\le\;
    L_r\,\varepsilon_N^{\alpha}
    =\mathcal O(N^{-p}),
\end{equation}
So the expectation gap induced by synthesis decays at the same
rate $\mathcal O(N^{-p})$ as the point-wise reconstruction error.

Because the synthetic examples constitute a fraction
$\tfrac{k-1}{k}$ of the augmented dataset, the contribution of this
The gap to the overall estimation error is bounded by
\begin{equation}
  \mathcal L_{\text{synth}}
    \;\le\;
    \frac{k-1}{k}\,B_0 N^{-p},
    \label{eq:synth_bias}
\end{equation}
where $B_0$ absorbs all
constant factors (such as \( L_r \), \( L_g \), etc.)  that do not depend on $N$ or $k$.

Combining the statistical error and the synthesis bias terms gives the total error for the augmented model:
\begin{equation}
\zeta_{\mathcal{E}_\text{aug}} \le C_1 \cdot \sqrt{\frac{d + \log(1/\delta_1)}{kN}} + \frac{k-1}{k} \cdot B_0 N^{-p}.
\label{eq:thm2_bound}
\end{equation}

Therefore, we have the following:
\begin{equation}
\frac{\zeta_{\mathcal{E}_\text{aug}}}{\zeta_{\mathcal{E}}} \approx \frac{C_1 \cdot \sqrt{\frac{d + \log(1/\delta_1)}{kN}} + \frac{k-1}{k} \cdot B_0 N^{-p}}{C_1 \cdot \sqrt{\frac{d + \log(1/\delta_1)}{N}}}
= \sqrt{\frac{1}{k}} + \frac{k-1}{k} \cdot \frac{B_0 N^{-p}}{C_1 \cdot \sqrt{\frac{d + \log(1/\delta_1)}{N}}}.
\end{equation}

 Let $C_2:= d + \log(1/\delta_1)$ represent the complexity and confidence term, and define $\rho := \frac{B_0}{C_1 \sqrt{C_2}}$ as a consolidated constant representing the ratio of synthesis bias scaling to statistical error scaling. Then we can rewrite the error ratio as a function of $k$ and $N$ as follows:
  \begin{equation}
   \frac{\zeta_{\mathcal{E}_\text{aug}}}{\zeta_{\mathcal{E}}} (k,N) = \sqrt{\frac{1}{k}} + \frac{k - 1}{k} \cdot \rho N^{1/2 - p}.
 \end{equation}

 We analyze this function with respect to the synthesis factor $k$ for a fixed original sample size $N$. Note that as $k$ becomes large, the term \( \sqrt{1/k} \) approaches 0 and \( \frac{k-1}{k} \) approaches 1. Therefore, the limit is determined by the synthesis bias term relative to the original error scaling:
    \begin{equation}
    \lim_{k \to \infty} \frac{\zeta_{\mathcal{E}_\text{aug}}}{\zeta_{\mathcal{E}}} (k,N)= \rho N^{1/2 - p}.
    \end{equation}
    
    Synthesis provides an advantage if \(  \frac{\zeta_{\mathcal{E}_\text{aug}}}{\zeta_{\mathcal{E}}}(k,N) < 1 \). If the asymptotic value of the ratio is less than 1, i.e., if $\rho N^{1/2 - p} < 1$, then by the continuity of \(  \frac{\zeta_{\mathcal{E}_\text{aug}}}{\zeta_{\mathcal{E}}}(k, N) \) with respect to \( k \) (for \( k \ge 1 \)) and the fact that \( \frac{\zeta_{\mathcal{E}_\text{aug}}}{\zeta_{\mathcal{E}}}(1,N) = 1 \), there must exist a sufficiently large $k$ such that $\frac{\zeta_{\mathcal{E}_\text{aug}}}{\zeta_{\mathcal{E}}}(k, N) < 1$.
    
    Solving the inequality $\rho N^{1/2 - p} < 1$ for $N$ gives the condition under which such a beneficial \( k \) exists:
    \begin{equation}
    N^{1/2 - p} < \frac{1}{\rho} \implies N > \left( \frac{1}{\rho} \right)^{\frac{1}{1/2 - p}} \quad \text{assuming } 1/2 - p > 0.
    \end{equation}
    Substituting back the definitions of \( \rho \) and \( C_2 \):
    \begin{equation}
    N > \left( \frac{C_1 \sqrt{C_2}}{B_0} \right)^{\frac{1}{1/2 - p}} = \left( \frac{C_1 \sqrt{d + \log(1/\delta_1)}}{B_0} \right)^{\frac{1}{1/2 - p}} =: N_0.
   \end{equation}
    
    Thus, for all original sample sizes $N$ greater than this threshold $N_0$, the asymptotic error ratio is less than 1, guaranteeing that there exists a synthesis factor $k$ (sufficiently large) such that the estimation error under the augmented training is strictly smaller than that of training with the original preference dataset. This completes the proof. (Note: If $1/2 - p \le 0$, the condition $\rho N^{1/2 - p} < 1$ might hold for all $N$ or only for small $N$, depending on $\rho$).

\end{proof}

    \subsection{Lemma}
    
   \begin{lemma}
\label{lem:w1_bound}
Let $\mathcal E:=\{\mathbf e_i\}_{i=1}^N$ be the multiset of original
embeddings and
$\mathcal E_{\mathrm{synth}}:=\{\hat{\mathbf e}_i\}_{i=1}^N$
its synthetic counterparts generated as in Section~\ref{sec:synthesis}.
If
$\lVert\hat{\mathbf e}_i-\mathbf e_i\rVert\le \varepsilon_N$ for every $i$
(with $\varepsilon_N=\mathcal O(N^{-p})$),
Then the empirical measures
$\mu:=\tfrac1N\sum_i\delta_{\mathbf e_i}$ and
$\hat\mu:=\tfrac1N\sum_i\delta_{\hat{\mathbf e}_i}$
satisfy
\[
  W_1(\hat\mu,\mu)\;\le\;\varepsilon_N
  \;=\;\mathcal O\!\bigl(N^{-p}\bigr),
\]
Where $W_1$ is the 1-Wasserstein distance. $\delta_{\mathbf{e}}$ denotes the Dirac measure (unit point mass) at $\mathbf{e}$.
\end{lemma}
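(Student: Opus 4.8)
The plan is to bound $W_1(\hat\mu,\mu)$ by exhibiting a single explicit coupling and invoking the fact that the Wasserstein distance is an \emph{infimum} over all couplings, so that any particular admissible coupling immediately furnishes an upper bound. I would start from the Kantorovich formulation
\begin{equation}
W_1(\hat\mu,\mu)=\inf_{\gamma\in\Gamma(\hat\mu,\mu)}\int \|\mathbf x-\mathbf y\|\,d\gamma(\mathbf x,\mathbf y),
\end{equation}
where $\Gamma(\hat\mu,\mu)$ denotes the set of all joint measures on $\mathbb{R}^d\times\mathbb{R}^d$ whose first marginal is $\hat\mu$ and whose second marginal is $\mu$.

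First I would construct the natural index-matching (diagonal) coupling that transports each synthetic atom $\hat{\mathbf e}_i$ directly onto its original counterpart $\mathbf e_i$:
\begin{equation}
\gamma_0:=\frac1N\sum_{i=1}^N \delta_{(\hat{\mathbf e}_i,\mathbf e_i)}.
\end{equation}
This is well-defined because both empirical measures place equal mass $1/N$ on exactly $N$ atoms, indexed consistently. I would then verify $\gamma_0\in\Gamma(\hat\mu,\mu)$ by checking its marginals: summing out the second coordinate recovers $\tfrac1N\sum_i\delta_{\hat{\mathbf e}_i}=\hat\mu$, and summing out the first coordinate recovers $\tfrac1N\sum_i\delta_{\mathbf e_i}=\mu$, so $\gamma_0$ is a valid transport plan.

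Next I would substitute $\gamma_0$ into the Kantorovich objective and apply the pairwise-closeness hypothesis. Since $\gamma_0$ is supported on the $N$ pairs $(\hat{\mathbf e}_i,\mathbf e_i)$, the transport cost is
\begin{equation}
\int \|\mathbf x-\mathbf y\|\,d\gamma_0=\frac1N\sum_{i=1}^N\|\hat{\mathbf e}_i-\mathbf e_i\|\le\frac1N\sum_{i=1}^N\varepsilon_N=\varepsilon_N,
\end{equation}
where the inequality uses the assumption $\|\hat{\mathbf e}_i-\mathbf e_i\|\le\varepsilon_N$ for every $i$. Because $W_1$ is the infimum over all couplings and $\gamma_0$ is one admissible choice, this yields $W_1(\hat\mu,\mu)\le\varepsilon_N=\mathcal{O}(N^{-p})$, as claimed.

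This argument is essentially a one-line application of the definition, so I do not anticipate a serious obstacle; the only points requiring care are confirming that the index-matching plan is genuinely a coupling---i.e.\ that the consistent indexing of the two multisets makes the diagonal measure have the correct marginals---and that the per-pair bound $\varepsilon_N$ (which traces back to inequality~\ref{eq:gap_betwee_e+ande} in the proof of Theorem~\ref{thm:1-app}) holds uniformly over $i$. Both are immediate from the construction in Section~\ref{sec:synthesis}, where each $\hat{\mathbf e}_i$ is synthesized from the corresponding original embedding $\mathbf e_i$, so the pairing is canonical rather than arbitrary.
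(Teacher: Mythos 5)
Your proposal is correct and follows essentially the same route as the paper: both use the index-matching (diagonal) coupling $\tfrac{1}{N}\sum_i \delta_{(\hat{\mathbf e}_i,\mathbf e_i)}$ and bound the Kantorovich objective by the uniform per-pair distance $\varepsilon_N$, with the infimum over couplings yielding $W_1(\hat\mu,\mu)\le\varepsilon_N$. Your version merely spells out the marginal verification that the paper leaves implicit in its deterministic map $T(\mathbf e_i)=\hat{\mathbf e}_i$.
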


\begin{proof}
Couple $\hat\mu$ and $\mu$ by the deterministic map
$T(\mathbf e_i)=\hat{\mathbf e}_i$.
With this coupling,
$\mathbb E\bigl[\lVert X-Y\rVert\bigr]
      =\tfrac1N\sum_i\lVert\hat{\mathbf e}_i-\mathbf e_i\rVert
      \le \varepsilon_N$,
and by the Monge–Kantorovich definition
$W_1(\hat\mu,\mu)\le\mathbb E\!\bigl[\lVert X-Y\rVert\bigr]$,
proving the claim.
\end{proof}

    \subsection{Empirical Verification on the Theorems}
    \label{sec:verification}
    We conduct the experiments based on the Llama-3.1-8B-Instruct on the HH-RLHF dataset for the empirical verification of the theorems.

        \subsubsection{Estimation of the Constants}
        \label{sec:verification_c}
    \paragraph{Estimation of $C_1$.} The constant $C_1$ in Equation~\ref{eq:ucb_bound} is rooted in standard learning theory~\cite{shalev2014understanding} and depends on model complexity and the loss function. We estimate $C_1$ empirically as follows. First, we establish a proxy for the optimal reward model, $r_o^*$, by training an MLP reward model on the largest available training dataset of 100,000 original preference pairs.
    Next, we train empirical reward models, $\hat{r}_{\mathcal{E}}^{(N)}$, on smaller subsets of original preference pairs with varying sizes $N \in \{100, 500, 1000, 2000, 5000, 10000, 50000\}$. For each $N$, the estimation error $\zeta_{\mathcal{E}}^{(N)} = \mathcal{L}_{RM}^{\mathcal{P}_e}(\hat{r}_{\mathcal{E}}^{(N)})-  \mathcal{L}_{RM}^{\mathcal{P}_e}({r}^*_{o})  $ is calculated to quantify the deviation of the empirically trained model $\hat{r}_{\mathcal{E}}^{(N)}$ from the optimal proxy $r_o^*$ (Here we use 10,000 samples from the test set in HH-RLHF to approximate the distribution $\mathcal{P}_e$). The value of $\zeta_{\mathcal{E}}^{(N)}$ is averaged over 3 independent training runs for each $N$.
    $C_1$ is then determined by fitting these average errors $\zeta_{\mathcal{E}}^{(N)}$ to the approximate error bound $\zeta_{\mathcal{E}}^{(N)} \approx C_1 \sqrt{(d + \log(1/\delta))/N}$ across the different sample sizes $N$. For this fitting, we use $d=4096$ (embedding dimension) and a confidence level $\delta=0.05$. This procedure yields $C_1 \approx 0.24$.

\paragraph{Estimation of $p$ and $B_0$.}
The estimation of $p$, which characterizes the decay rate of the VAE reconstruction error $\epsilon_{\mathrm{rec}}$ with the size of the VAE training data $N$ (as in $\epsilon_{\mathrm{rec}} = \mathcal{O}(N^{-p})$), is performed through a sequence of steps. First, we train our VAE model on several subsets of the original HH-RLHF preference embeddings, with these subsets having varying sizes $N \in \{100, 500, 1000, 2000, 5000, 10000, 50000, 100000\}$. For each VAE model trained on a dataset of a specific size $N$, we then compute its average reconstruction error, denoted $\epsilon_{\mathrm{rec}}(N)$. This error is consistently measured on a held-out test set of embeddings from the test set of HH-RLHF, which was not used for training any of the VAEs. Assuming a power-law relationship $\epsilon_{\mathrm{rec}}(N) \approx A \cdot N^{-p}$ (where $A$ is a constant), we take the natural logarithm of both sides, yielding $\ln(\epsilon_{\mathrm{rec}}(N)) \approx \ln(A) - p \ln(N)$. This transformation reveals a linear relationship between $\ln(\epsilon_{\mathrm{rec}})$ and $\ln(N)$. A log-log regression is then performed, which involves fitting a linear model to the set of data points $\{(\ln(N), \ln(\epsilon_{\mathrm{rec}}(N))\}$ derived from the different training dataset sizes $N$. The estimate for $p$ is then determined as the negative of the slope of this fitted line. Following this procedure, our experiments suggest $p \approx 0.26$.

To estimate $B_0$, the constant factor in the synthesis bias term $\frac{k-1}{k} B_0 N^{-p}$ as in inequality~\ref{eq:synth_bias}, we use the previously determined $p \approx 0.26$ and $C_1 \approx 0.24$. We train reward models $\hat{r}_{\mathcal{E}_{\mathrm{aug}}}^{(N,k)}$ on augmented datasets (original size $N \in \{100, ..., 50000\}$, augmentation factor $k=2$) and compute their estimation errors $\zeta_{\mathcal{E}_{\mathrm{aug}}}^{(N,k)} = \mathcal{L}_{RM}^{\mathcal{P}_e}(\hat{r}_{\mathcal{E}_{\mathrm{aug}}}^{(N,k)}) - \mathcal{L}_{RM}^{\mathcal{P}_e}(r_o^*)$ relative to the optimal proxy $r_o^*$, averaging over 3 runs. The synthesis bias contribution for each $N$ is estimated by subtracting the statistical error term from the total estimation error (based on inequality~\ref{eq:thm2_bound}): $\text{Synthesis\_Bias}_N \approx \zeta_{\mathcal{E}_{\mathrm{aug}}}^{(N,k)} - C_1 \sqrt{(d + \log(1/\delta_1))/(Nk)}$. With $\delta_1=0.05$, we then fit the model $\text{Synthesis\_Bias}_N = \frac{k-1}{k} B_0 N^{-p}$, i.e., $\frac{1}{2} B_0 N^{-p}$, to these estimated bias values across the different $N$, yielding $B_0 \approx 5.63$.

   \paragraph{Estimation of $N_0$.}   Based on our empirical estimations above, we have the synthesis bias constant $B_0 \approx 5.63$, the bias decay exponent $p \approx 0.26$, and the constant $C_1 \approx 0.24$. We use the embedding dimension $d = 4096$ as the complexity measure and a confidence level $\delta_1 = 0.05$.
    Plugging these values into the formula for $N_0$:
    \begin{align*}
    N_0 &= \left( \frac{C_1 \sqrt{d + \log(1/\delta_1)}}{B_0} \right)^{\frac{1}{1/2 - p}} \\
    &= \left( \frac{0.24 \sqrt{4096 + \log(1/0.05)}}{5.63} \right)^{\frac{1}{0.5 - 0.26}} \\
    &\approx \left( \frac{0.24 \sqrt{4096 + 2.9957}}{5.63} \right)^{\frac{1}{0.24}} \\
    &\approx \left( \frac{0.24 \times \sqrt{4098.9957}}{5.63} \right)^{4.1667} \\
    &\approx \left( \frac{0.24 \times 64.0234}{5.63} \right)^{4.1667} \\
    &\approx \left( \frac{15.3656}{5.63} \right)^{4.1667} \\
    &\approx (2.7292)^{4.1667} \approx 65.59.
    \end{align*}
    Thus, based on these empirically derived parameters, the threshold sample size is $N_0 \approx 65.59$. This suggests that for datasets with $N > N_0 \approx 65.59$ (i.e., containing at least 66 original preference pairs), there exists an augmentation factor $k$ large enough such that using synthetic data improves the estimation error compared to using only the original pairs, even considering the accumulation of synthesis bias. This supports our Theorem~\ref{thm:2-app}, which implies that \textit{the requirement on the sample size of the original preference dataset is easy to satisfy in practice}.
    
     \subsubsection{Verification on the Reconstruction Error Decay for VAE}    \begin{figure}[h]
        \centering
        \includegraphics[width=0.6\textwidth]{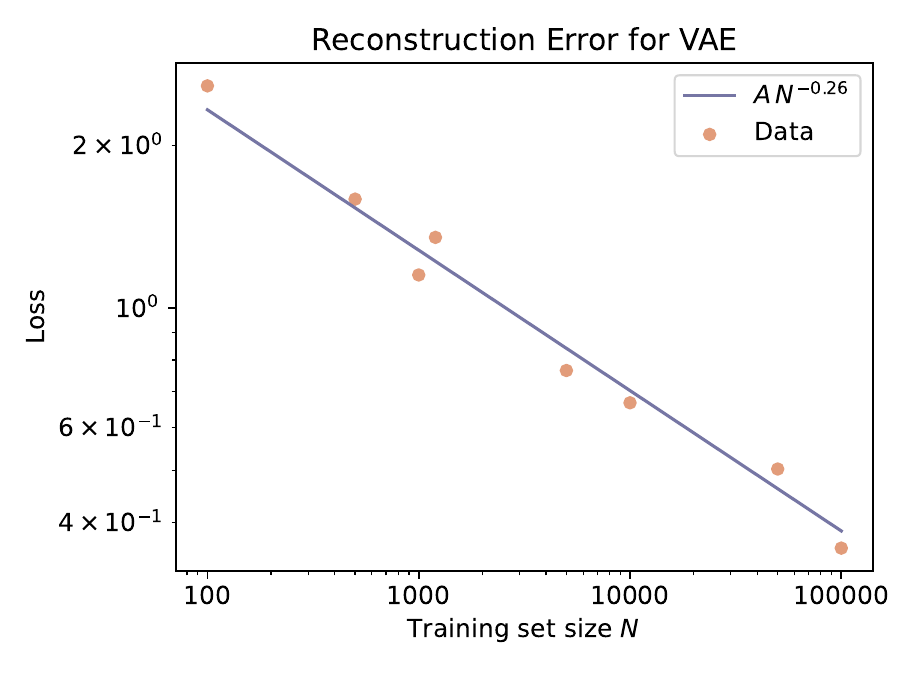}
        \caption{Log-log plot of VAE reconstruction error ($\epsilon_{\mathrm{rec}}$) against the training dataset size ($N$). The linear trend supports the power-law decay $\epsilon_{\mathrm{rec}} = \mathcal{O}(N^{-p})$, with an estimated $p \approx 0.26$.}
        \label{fig:vae_loss_decay}
    \end{figure}
     To empirically verify the decay of the VAE reconstruction error $\epsilon_{\mathrm{rec}}$ with increasing training data size $N$, we follow the procedure outlined for estimating $p$ in Section~\ref{sec:verification_c} (specifically, in the paragraph "Estimation of $p$ and $B_0$"). We train separate VAE models on subsets of the original preference embeddings, with dataset sizes $N$ ranging from $100$ to $100,000$ samples, as used for the estimation of $p$. For each $N$, the VAE is trained until convergence on the respective subset of embeddings. Its reconstruction error $\epsilon_{\mathrm{rec}}$ is then evaluated as the mean squared error (MSE) between the original embeddings and their reconstructions on a held-out test set of embeddings for HH-RLHF. A log-log plot of $\epsilon_{\mathrm{rec}}$ against $N$, as shown in Figure~\ref{fig:vae_loss_decay}, reveals a clear linear trend consistent with the power-law relationship $\epsilon_{\mathrm{rec}} = \mathcal{O}(N^{-p})$, whose negative slope provides an empirical estimate for $p \approx 0.26$. \textit{This observed decay is crucial to support our Assumption~\ref{ass:1} that the bias introduced by the VAE diminishes with sufficient training data for the VAE itself}.

    \subsubsection{Calculation of $\epsilon_{\textrm{rec}}, d_{\textrm{VAE}}$ and $\sigma_{\textrm{noise}}$}
    The key parameters involved in our VAE-based synthesis and subsequent theoretical analysis are calculated as follows:
    \begin{itemize}
        \item \textbf{Reconstruction Error ($\epsilon_{\textrm{rec}}$)}: This is defined as the average $L_2$ distance (Mean Squared Error, MSE) between an input embedding $\mathbf{e}$ and its reconstruction $\hat{\mathbf{e}} = g_{\theta}(q_{\phi}(\mathbf{z}|\mathbf{e}))$ from the VAE. That is, $\epsilon_{\textrm{rec}} = \mathbb{E}[\|\mathbf{e} - g_{\theta}(q_{\phi}(\mathbf{z}|\mathbf{e}))\|^2]$. This error is measured on a held-out validation set of embeddings that were not used for training the VAE. For our main experiments, $\epsilon_{\textrm{rec}}$ is approximately 0.83 for $N=1000$.
        \item \textbf{VAE Latent Dimension ($d_{\textrm{VAE}}$)}: This represents the dimensionality of the latent space $\mathbf{z}$ learned by the VAE. In our experiments, we set $d_{\textrm{VAE}} = 16$. 
        \item \textbf{Synthesis Noise Standard Deviation ($\sigma_{\textrm{noise}}$)}: This parameter controls the magnitude of the Gaussian noise added in the latent space for generating synthetic samples, as detailed in Section~\ref{sec:synthesis}. Specifically, for a latent variable $\mathbf{z}$ corresponding to an original embedding, a synthetic latent variable $\mathbf{z}'$ is generated by sampling $\boldsymbol{\eta} \sim \mathcal{N}(0, \sigma_{\textrm{noise}}^2 \mathbf{I})$ and setting $\mathbf{z}' = \mathbf{z} + \boldsymbol{\eta}$. Based on empirical tuning for the quality and diversity of generated samples, we use $\sigma_{\textrm{noise}} = 0.01$ for our experiments. This value was found to perform well in our ablations (Figure~\ref{fig:ablations}).
    \end{itemize}
Based on the calculated values, we can easily check that \textit{three conditions for the lower bound in Theorem~\ref{thm:1-app} to be tight are easy to satisfy, i.e., the synthetic preference embeddings can have a good quality when evaluated by the best possible reward function}. 
    
     \subsubsection{Verification on the Reward Values for Synthetic Examples}
     To evaluate how well our synthetic examples preserve the relative reward distinctions present in the original data, we utilized a pre-trained best possible reward model, which is trained with 100,000 samples. We assume this model is well-trained and could be taken as the best possible reward model. We use it to measure the reward gap between the original positive and negative samples from the training set, as well as the gap between their synthetically generated counterparts.
     We observed that the average reward gap for the original positive and negative samples, as scored by the best possible reward model, was 2.86. For the synthetic positive and negative samples generated through our latent space synthesis, the average reward gap was 2.64.
     Furthermore, we found that the synthetic generation process maintained 93.9\% of the original reward ordering between positive and negative pairs. \textit{This high level of consistency demonstrates that our synthetic samples effectively capture and retain the crucial relative preference information inherent in the original dataset, which also validates our conclusion in Theorem~\ref{thm:1-app}.}

\section{Limitations and Future Works}
\label{app:limitation}
Our proposed latent space synthesis method demonstrates significant advantages in efficiency and effectiveness for augmenting preference data based on offline embeddings, as shown in our experiments. However, a primary limitation of the current framework is its reliance on a static, pre-computed set of embeddings derived from a fixed dataset. This offline nature means the synthesis process does not adapt to changes that might occur during online training or fine-tuning scenarios, such as shifts in the underlying language model's representations or evolving data distributions. Future work could explore extending this latent space synthesis approach to online settings. This might involve developing methods to dynamically update the VAE model as new data becomes available or integrating the synthesis mechanism directly within online learning loops.

\end{document}